\def\eqref#1{equation~\ref{#1}}
\def\1{\bm{1}}
\DeclareMathAlphabet{\mathsfit}{\encodingdefault}{\sfdefault}{m}{sl}
\SetMathAlphabet{\mathsfit}{bold}{\encodingdefault}{\sfdefault}{bx}{n}
\newcommand{\KL}{{\mathrm{KL}}}
\DeclareMathOperator*{\argmin}{arg\,min}
\crefname{algocf}{alg.}{algs.}
\Crefname{algocf}{Algorithm}{Algorithms}
\DeclareMathOperator*{\crit}{crit}
\newtheorem{theorem}{Theorem}[section]
\newtheorem{lemma}[theorem]{Lemma}
\newtheorem{prop}[theorem]{Proposition}
\theoremstyle{definition}
\newtheorem{definition}{Definition}[section]
\theoremstyle{remark}
\newtheorem{nb}[theorem]{Note}
\newcommand{\Op}[1]{{\mathcal{#1}}}
\newcommand{\mc}[1]{\mathcal{#1}}
\newcommand{\M}{\Op{M}}
\title{On Information Geometry and Iterative Optimization \\ in Model Compression: Operator Factorization}
\author{Zakhar Shumaylov$^{1,2}$\thanks{Work done during an internship at Apple.} , Vasileios Tsiaras$^{1}$, Yannis Stylianou$^{1}$ \\
$^1$Apple, $^2$University of Cambridge\\ 
}
\begin{document}

\maketitle

\begin{abstract}
The ever-increasing parameter counts of deep learning models  necessitate effective compression techniques for deployment on resource-constrained devices. This paper explores the application of information geometry, the study of density-induced metrics on parameter spaces, to analyze existing methods within the space of model compression, primarily focusing on operator factorization. Adopting this perspective highlights the core challenge: defining an optimal low-compute submanifold (or subset) and projecting onto it. We argue that many successful model compression approaches can be understood as implicitly approximating information divergences for this projection. 
We highlight that when compressing a pre-trained model, using information divergences is paramount for achieving improved zero-shot accuracy, yet this may no longer be the case when the model is fine-tuned. In such scenarios, trainability of bottlenecked models turns out to be far more important for achieving high compression ratios with minimal performance degradation, necessitating adoption of iterative methods. In this context, we prove convergence of iterative singular value thresholding for training neural networks subject to a soft rank constraint. To further illustrate the utility of this perspective, we showcase how simple modifications to existing methods through softer rank reduction
result in improved performance under fixed compression rates.

\end{abstract}

\section{Introduction}
The last decade in deep learning has witnessed a trend towards increasingly larger models, exemplified by the rise of transformer-based architectures for large language models (LLMs), which have achieved state-of-the-art results in various natural language processing tasks. However, the substantial size of these models poses challenges for deployment on resource-constrained devices, in terms of computational cost, memory footprint, and energy consumption, necessitating model compression as a crucial strategy for mitigating deployment costs.

Although theoretical limits suggest the possibility of arbitrarily narrow models \citep{rochau2024new}, practical experience demonstrates the difficulties in training and generalizing such compact architectures \cite{khodak2021initialization}. Model compression techniques aim to address this challenge by instead training large networks, utilizing the full power of overparameterization \citep{arora2018optimization}, with the aim of reducing the computational and memory footprint of these models post-training, preserving as much performance as possible. Following the categorization presented by \citep{hoefler2021sparsity},  existing compression techniques broadly fall into five categories. Quantization methods reduce the bit-width of weights and activations (e.g. \cite{lin2024awq, tseng2024quip,krishnamoorthi2018quantizing}).
\begin{figure*}[t]
    \centering
    \includegraphics[trim={1cm 0 1cm 0},width=1\textwidth,clip]{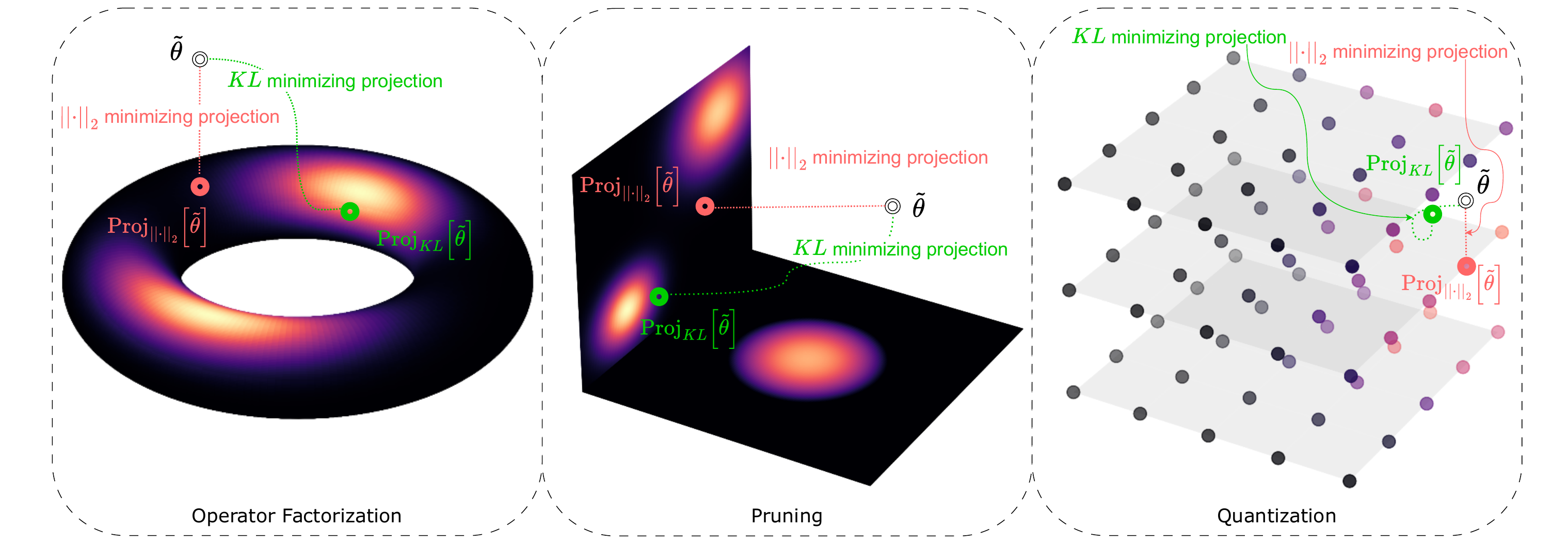}
    \caption{Illustration of the main components of compression methods, highlighted by the main three directions: (1) Operator Factorization, with torus depicting the low rank manifold; (2) Pruning, with two planes depicting the sparse vector space; (3) Quantization, with lattice depicting the level of quantization. On all three, the difference between euclidean versus information projection on the low-compute manifold/set. The color here depicts the loss of the resulting model, highlighting that in a single projection step KL minimizing projection result in significantly improved models.\vspace{-1em}}
    \label{fig:wrapfig}
\end{figure*}
Pruning techniques induce sparsity in weight matrices (e.g. \citep{singh2020woodfisher, Lin2020Dynamic, frantar2023optimal, lecun1989optimal, kim2024shortened}). Distillation methods leverage a teacher-student framework to transfer knowledge to a smaller model (e.g. \citep{hinton2015distillingknowledgeneuralnetwork, mishra2018apprentice, polino2018model}). Parameter sharing methods re-use various network components \citep{uyuk2024learning,plummer2022neural}.
And finally, operator factorization methods decompose weight matrices and tensors into multiple components, which individually are significantly cheaper (e.g. \citep{DBLP:journals/corr/abs-2407-04797,maison2023compression,chen2018groupreduce, noach2020compressing, denton2014exploiting, lebedev2014speeding, sainath2013low,wang2021pufferfish}), often resulting in improved scaling laws \citep{potapczynski2024searching,qiu2024compute}. In this paper we focus on operator factorization, but note that techniques above are often complementary, and combined approaches frequently yield the best compression results \citep{hohman2024model,CALDERA, OATS,zhang2024lqer}, although are non-orthogonal, and must be combined mindfully \citep{harma2025effectiveinterplaysparsityquantization}. Recent years have seen increased theoretical attention devoted to pruning techniques, leading to more theoretically grounded methods with improved performance (e.g. \citep{wu2024the, kuznedelev2023cap, mcgowan2024efficient}). However, within operator factorization, the focus has primarily remained on empirical evaluations (e.g. \citep{FWSVD, TFWSVD, RankDyna, pletenev2023computational}), with largest focus on language modelling tasks. This disparity raises a critical question: can a more rigorous theoretical understanding be developed for operator factorization methods? 

This paper addresses this question directly from two perspectives. Firstly, by establishing a precise mathematical formulation of the problem, we demonstrate that the appropriate framework for understanding model compression lies within information geometry \citep{amari2016information}. Similar observations have previously been highlighted in the context of pruning of natural parameters by \citep{liu2004information}, but has yet to make its way into deep learning. This perspective illuminates why Fisher/Hessian-based formulations are often the most effective, and we explicitly identify the various approximations inherent in existing methods. 
Secondly, we consider the effect of iterative compression methods in analogy with \citep{wu2024the}, highlighting that iterative compression implicitly adjusts the optimization problem, highlighting its connection to iterative singular value hard thresholding. To illustrate the utility of this formulation, we analyze a recently proposed method of \citep{OIALR}, highlighting limitations of the proposed singular value cutoff criterion, and, leveraging our iterative information geometric framework, derive modifications to methods of \citep{OIALR} and \citep{TRP}, through additions of Fisher information.

\section{Preliminaries}

We consider a supervised problem of predicting outputs $y \in \mathbb{Y}$ from inputs $x \in \mathbb{X}$. We assume a probabilistic model for the conditional distribution in the form $p_\theta(y | x)=p(y | f(x, \theta)),$ where $p(y | \cdot)$ is taken to be an exponential family with natural parameters in $\mathbb{F}$ and $f: \mathbb{X} \times \mathbb{R}^D \rightarrow \mathbb{F}$ is a prediction function parameterized by $\theta \in \M =\mathbb{R}^D$. Given $N$ iid training samples $\left(x_n, y_n\right)_{n=1}^N$, we aim to minimize the negative log-likelihood (NLL):
\begin{equation}
\label{eq:loss}
\mathcal{L}(\theta):=-\sum_n \log p_\theta\left(y_n | x_n\right)=-\sum_n \log p\left(y_n | f\left(x_n, \theta\right)\right).
\end{equation}
This framework covers most common scenarios such as least-squares regression with $\mathbb{X}=\mathbb{R}^{\dim\mathbb{X}}, \mathbb{Y}=\mathbb{R}^{\dim\mathbb{Y}}$ with $p(y | f)= \mathcal{N}\left(y ; f, I_{\operatorname{dim}\mathbb{Y}}\right)$ or $C$-class classification with cross-entropy loss $\mathbb{Y}=\{1, \ldots, C\}$, $\mathbb{F}=\mathbb{R}^C$ and $p(y=c | f)=\pi_c(f) := \exp \left(f_c\right) / \sum_i \exp \left(f_i\right)$. Classically, minimizing \Cref{eq:loss} involves using (stochastic) gradient descent. However, adaptive schemes with preconditioning or Gauss-Newton updates, such as natural gradient descent or its approximations yield faster and stabler convergence \cite{duchi2011adaptive, bernstein2024oldoptimizernewnorm, gupta2018shampoopreconditionedstochastictensor,martens2020new}. These methods are able to align with the (information) geometry of the problem by pulling back the distance onbetween parameters via the Kullback-Leibler divergence between the corresponding probability distributions, rather than the Euclidean distance. The intuition can be summarized using the following two facts, implicitly considering $f:\mathbb{X}^N \times \mathbb{R}^D \rightarrow \mathbb{F}^N$ for ease of presentation:
\begin{equation}
\label{eq:loss_as_KL_KL_expansion}
\KL(p_{\theta} | p_{\theta+\Delta\theta}) = \frac{1}{2}\Delta\theta^\top \mathcal{I}(\theta) \Delta\theta + \mathcal{O}(\Delta\theta^3),\qquad\text{and}\qquad\mathcal{L}(\theta)=\KL(p_{\text{em}} | p_{\theta}) + \text{const.}
\end{equation}
for $p_{\text{em}}$ the empirical distribution, $\mc{I}$ denoting the fisher information metric (FIM), coinciding with a generalized Gauss-Newton \cite{schraudolph2002fast} approximation of the Hessian \cite{kunstner2020limitationsempiricalfisherapproximation}:
$$
\mathcal{I}(\theta):=\sum_n \mathbb{E}_{p_\theta\left(y \mid x_n\right)}\left[\nabla_\theta \log p_\theta\left(y | x_n\right) \nabla_\theta \log p_\theta\left(y | x_n\right)^{\top}\right]. 
$$
Thus, for the problems considered the distance induced by KL divergence, represented  infinitesimally using the FIM, appears to be most fitting.

\section{Model Compression}\label{sec:compression}

The central goal of model compression is to derive a \emph{computationally efficient} and \emph{generalizable} model for a given task in \Cref{eq:loss}. The simplest, and seemingly straightforward approach is to directly train a small model. While theory suggests that even extremely narrow networks can approximate any desired function \citep{rochau2024new}, training such networks in practice proves incredibly challenging, if not impossible \cite{waleffe2020principalcomponentnetworksparameter,bejani2020adaptivelowrankfactorizationregularize}. Consequently, the standard practice involves training an overparameterized model, which is then compressed. Though effective, this two-stage process may not actually be the most efficient use of computational resources under a fixed budget \cite{busbridge2025distillationscalinglaws}.

To illustrate the core concepts, we consider a simplified framework for the case of low rank factorization where $\M = \mathbb{R}^{\sum n_i\times n_{i+1}}$ represents the space of weight matrices, and $\M_{<r} = \{\theta\in\mathbb{R}_{<r_i}^{\sum n_i\times n_{i+1}}\}$ denotes the variety of lower-rank matrices, where $r_i$ represents the rank constraint for the $i$-th layer. For a comprehensive discussion of the properties of these sets, see \citep{hiriart2013variational}. 
Considering $\M_{<r}$ as the low compute subset reduces the number of parameters for each layer from $n_i\times n_{i+1}$ in $\mc{M}$ to $r_i\times(n_i+n_{i+1})$, meaning that for sufficiently small $r_i$ these become significantly cheaper, going from a quadratic to a linear number of parameters with respect to dimensions of the matrix. We emphasize that the discussion here generalizes to the setting of pruning and quantization, in which cases the set $\M_{<r}$ would correspond to a linear subspace or lattice correspondingly. Generally, the bilevel problem of interest can be written as 
\[
\min_r \;\|r\|_1 \quad \text{subject to} \quad\min_{\theta\in\M_{<r}} \mathcal{L}(\theta) \leq \varepsilon,
\]
where $\|r\|_1$ represents a measure of the compression level (e.g., number of parameters), $\mathcal{L}(\theta)$ is the loss function, and $\varepsilon$ is an acceptable error threshold. Directly tackling such a bilevel constrained minimization is not computationally feasible by the virtue of $r_i$ being integers and constraint set being non-convex, making the optimization problem not approachable using gradient based methods. Although relaxation based methods exist via masking \cite{DBLP:conf/naacl/GaoHHSJ24}, nuclear norm regularization \cite{TRP} or alternative formulations \citep{wu2024the}. As is common with optimization problems, the problem needs to be approached in an iterative manner. As mentioned above, the most common approach in model compression is to first find a full over-parameterized solution, after which the appropriate $r = r(\tilde\theta)$ is identified and $\tilde\theta$ is projected as ${\theta}_r = \operatorname{proj}_{\M_{<r(\tilde\theta)}}(\tilde\theta)$. As projection may result in performance degradation, the resulting model is fine-tuned:
\[
\tilde\theta = \argmin_{\theta\in\M} \mathcal{L}(\theta) \xrightarrow[]{r(\tilde\theta)}\tilde\theta_r = \argmin_{\theta\in\M_{<r(\tilde\theta)}} \mathcal{L}(\theta).
\]
Such projections can either be done once, resulting in so called \emph{train-then-sparsify} methods, or iteratively, resulting in so called \emph{sparsify-during-training} methods \cite{hoefler2021sparsity}. As long as the minimum is on the restricted set, it should be possible to find it, as $\M_{<r_i}$ is simply connected \cite{uschmajew2020geometric}.

To summarize, any model compression algorithm must address two main questions. That is \textbf{rank selection}, and \textbf{projection}. In \Cref{sec:related}, we summarize existing approaches in the literature surrounding operator factorization, showcasing that their heuristic success can be attributed to improvements in either rank or projection. Based on the discussion in the previous section, we argue that projection must be performed according to information distance, with rank selection according to the resulting distance, in terms of KL divergence, to the projection. 
\subsection{Information Geometry in Model Compression}\label{sec:ig}
Following the framework of \citep{amari2010information}, we begin by considering a family of probability distributions $\mathcal{D} = \{p_\theta(\cdot) \mid \theta \in \mc{M} \subseteq \mathbb{R}^d\}$, parameterized by $\theta \in \mc{M}$. This family forms a statistical manifold, with $\theta$ serving as a local coordinate system. This manifold is then equipped with a \emph{divergence} $D(p_\theta \| p_{\theta'})$, quantifying dissimilarity between distributions $p_\theta$ and $p_{\theta'}$ on $\mathcal{D}$. In general, this divergence does not need to be symmetric or satisfy the triangle inequality. An important class of divergences in this context are the \emph{Bregman divergences} generated by a strictly convex and differentiable function $\phi$:  
\begin{equation}\label{eq:bregman_div}
    D_\phi(\theta \| \theta^\prime) = \phi(\theta) - \phi(\theta^\prime) - \nabla \phi(\theta)^\top (\theta - \theta^\prime).
\end{equation}
For exponential and mixture families, the \emph{Kullback-Leibler (KL) divergence} $\operatorname{KL}(p_\theta | p_{\theta'}) = \mathbb{E}_{p_\theta}[\log(p_\theta(x) / p_{\theta'}(x))]$ arises naturally and can be viewed as a Bregman divergence, generated by the entropy or log-partition function \citep{amari2016information}. The local structure of the statistical manifold arises from this divergence.
Specifically, considering an infinitesimal perturbation $\mathrm{d}\theta$ around a point $\theta$, the second-order Taylor expansion of the divergence $D(p_\theta \| p_{\theta + \mathrm{d}\theta})$ defines the \emph{Riemannian metric tensor} $g(\theta)$ with components:
$
g_{ij}(\theta) = \frac{\partial^2}{\partial \theta^i \partial \theta'^j} D(p_\theta \| p_{\theta'})|_{\theta' = \theta}.
$
This metric endows $\mathcal{D}$ with a Riemannian structure, allowing us to define notions of distance and orthogonality. As is often considered in information geometry \citep{ay2017information}, we are also interested in dual connections generated by the divergence \citep{amari2010information}: 
$$
\Gamma_{i j k}(\theta)=-\frac{\partial^3}{\partial \theta_i \partial \theta_j \partial \theta^\prime_k} D(\theta\|\theta^\prime)_{\mid \theta=\theta^\prime}, \quad
\Gamma_{i j k}^*(\theta)=-\frac{\partial^3}{\partial \theta^\prime_i \partial \theta^\prime_j \partial \theta_k} D(\theta\|\theta^\prime)_{\mid \theta=\theta^\prime},
$$
which are dually coupled with respect to $g_{i j}$ \citep{amari2000methods}. The dual connections $\Gamma$ and $\Gamma^*$ result in two families of geodesics, termed the \emph{$e$-geodesics} and \emph{$m$-geodesics}, respectively. These play a crucial role in defining projections onto submanifolds within the statistical manifold.

\paragraph{Iterative \textit{m}-projection for Pruning in Flat Manifolds}
A manifold in coordinates $\theta$ is \emph{$e$-flat} (exponential-flat), if $\Gamma_{i j k}=0$, and \emph{$m$-flat} (mixture-flat) if $\Gamma^*_{i j k}=0$. A typical example of an $e$-flat manifold is the exponential family:
$
p(x, f)=\exp \left\{\sum f_i k_i(x)-\psi(f)\right\},
$
where $k_i(x)$ are given functions and $\psi$ is the normalizing factor. In such manifolds, coordinate pruning, as in \citep{liu2004information}, can be viewed as iterative \emph{$m$-projection} and analyzed as follows. Suppose we are given two flat submanifolds of $\mc{M}$ with $p_1 \in E_1 \subset E_2$. Then for $p_2^*$ the $m$-projection of $p_1$ onto $E_2$,
$p_2^* = \argmin_{p\in E_2}\;D(p_1\|p),$ 
we have that $p_2^*$ is given by the $m$-geodesic connecting $p_1$ and $p_2^*$ orthogonal to $E_2$ at $p_2^{*}$ (e.g. by \Cref{prop:orthog} or \citep{liu2004information}). Then, for any $p_2 \in E_2$, the Generalized Pythagorean theorem \citep{ay2017information,nielsen2021geodesictrianglesrightangles} results in:
$
D(p_1 \| p_2) = D(p_1 \| p_2^*) + D(p_2^* \| p_2).
$
This allows for direct analysis of error accumulation under iterative projection onto lower-dimensional (pruned) flat submanifolds.

\paragraph{Iterative \textit{m}-projection in Curved and Overparameterized Manifolds}

In the case of overparameterized models, the mapping $f: \mc{M} \to \mathbb{F}$, the image $f(\mc{M})$ may not be a simple submanifold, potentially containing non-differentiable structures, making theory for curved exponential families \citep{amari2016information,amari1995information} unusable. While the generalized pythagorean theorem may still be used when the image is convex \citep{kieffer1994elements}, the case of deep neural networks does not fall into any of these categories. Luckily, for the problem of model compression, the global structure of the parameterization is not of interest, only the local structure is. This is further helped by the fact that for operator factorization \citep[Example 8.14]{lee2003smooth} and pruning (but not quantization), the low-compute subset is a smooth manifold, implying that the generalized pythagorean theorem can be used as long as locally, the parameterization image results in a submanifold \cite[Corollary 4.2]{ay2017information}. How can this be achieved? 

We consider an open ball around the solution we found $B(\tilde\theta)$ such that intersection with the low rank manifold $\mc{M}_{r}$ is non-empty. Then, it is open and a submanifold \citep{lee2003smooth}.
We can then consider the set $f(\mc{M}_{r} \cap B(\tilde\theta))$. Since $\mc{M}_{r} \cap B(\tilde\theta)$ is an open subset of $\mc{M}_{r}$, it is a submanifold and therefore a manifold. Thus considering $f|_{B(\tilde\theta)}$, assuming it is a submersion, by \Cref{lem:submersion} below we know that $f(\mc{M}_{r} \cap B(\tilde\theta))\in \mathbb{F}$ is a submanifold, establishing information projections through geodesics:
\begin{prop}[Corollary 4.2 \cite{ay2017information}]\label{prop:orthog}
 Let $N$ be a differentiable submanifold of $\mc{M}$. Then $q \in N$ is a stationary point of the function $D(p \| \cdot): N \rightarrow \mathbb{R}, r \mapsto D(p \| r)$ iff the $m$-geodesic from $p$ to $q$ meets $N$ orthogonally.    
\end{prop}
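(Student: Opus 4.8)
The plan is to read the statement as a first-order optimality condition and to reduce it to a single gradient identity. Write $D_p:=D(p\,\|\,\cdot):\mathcal M\to\mathbb R$. A point $q\in N$ is a stationary point of the restriction $D_p|_N$ exactly when the differential $\mathrm dD_p|_q$ annihilates the subspace $T_qN\subseteq T_q\mathcal M$, i.e.\ when $g\big(\mathrm{grad}\,D_p(q),\,W\big)=0$ for every $W\in T_qN$, where $\mathrm{grad}$ and orthogonality are taken with respect to the metric $g$ induced by $D$. Hence the whole proposition follows from the single claim that $\mathrm{grad}\,D_p(q)$ is the velocity at $q$ of the $m$-geodesic joining $p$ to $q$: nondegeneracy of $g$ turns ``$\mathrm dD_p|_q$ kills $T_qN$'' into ``that velocity is $g$-orthogonal to $T_qN$'', and both directions of the ``iff'' fall out at once from this one equivalence.

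It therefore remains to establish the gradient identity, which I would do first in the dually flat setting relevant to this paper (an exponential/mixture family with $D$ the canonical Bregman divergence of \Cref{eq:bregman_div}). Let $\theta$ be the $e$-affine coordinates, $\eta_i=\partial_{\theta^i}\psi(\theta)$ the dual $m$-affine coordinates, and $(\psi,\varphi)$ the Legendre pair, so that, with the sign convention of the excerpt in which minimizing $D(p\,\|\,\cdot)$ is the $m$-projection, $D(p\,\|\,r)=\varphi(\eta_p)+\psi(\theta_r)-\theta_r\!\cdot\!\eta_p$. Differentiating in $\theta_r$ and using $\partial_{\theta^j}\psi=\eta_j$ gives $\partial_{\theta_r^j}D(p\,\|\,r)=\eta_{r,j}-\eta_{p,j}$, so $\mathrm dD_p|_q=\sum_j(\eta_{q,j}-\eta_{p,j})\,\mathrm d\theta^j$. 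Since $g_{ij}=\partial_{\theta^i}\eta_j$ and $\partial_{\eta_j}=\sum_i g^{ij}\partial_{\theta^i}$, one checks $g(\partial_{\eta_j},\cdot)=\mathrm d\theta^j$, whence $\mathrm{grad}\,D_p(q)=\sum_j(\eta_{q,j}-\eta_{p,j})\,\partial_{\eta_j}$; this is precisely the velocity, constant in the $\eta$-chart, of the segment $t\mapsto(1-t)\eta_p+t\eta_q$, i.e.\ of the $m$-geodesic from $p$ to $q$, evaluated at its endpoint $q$. That proves the identity and hence the proposition for the canonical divergence. Note that here I also use that this $m$-geodesic exists and is unique, which is automatic because $m$-geodesics are straight lines in the globally defined $\eta$-chart.

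For a general divergence $D$ on a not-necessarily-flat $\mathcal M$ the argument is purely local at $q$, so the explicit $\eta$-computation is replaced by the construction of $(g,\nabla,\nabla^*)$ from $D$ recalled in the excerpt (Eguchi's relations): its third-order part identifies the local behaviour of $D_p$ near $q$ with $g$ and the connection whose geodesics are the $m$-geodesics, which again pins $\mathrm{grad}\,D_p(q)$ down, up to sign, as the velocity at $q$ of the $m$-geodesic from $p$, and the rest is unchanged. I expect the only real obstacle to be bookkeeping rather than analysis: keeping straight which argument slot of $D$ is being differentiated, which of the dual connections the ``$m$-geodesic'' belongs to, and — in the non-flat case — importing existence and uniqueness of the joining $m$-geodesic rather than deriving it; the substantive step is the one-line differentiation above.
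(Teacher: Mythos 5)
The paper itself never proves this proposition---it is imported verbatim as Corollary 4.2 of Ay--Jost--L\^e--Schwachh\"ofer---so there is no internal argument to compare against; judged on its own terms, your first paragraph is correct and is the standard proof in the case that the paper actually uses. The Legendre computation $\partial_{\theta_r^j}D(p\,\|\,r)=\eta_{r,j}-\eta_{p,j}$, the identity $g(\partial_{\eta_j},\cdot)=\mathrm{d}\theta^j$, and hence $\mathrm{grad}\,D_p(q)=\sum_j(\eta_{q,j}-\eta_{p,j})\,\partial_{\eta_j}$, which is exactly the endpoint velocity of the segment that is affine in the $\eta$-chart (the $m$-geodesic), combined with the observation that stationarity of $D_p|_N$ at $q$ means $g$-orthogonality of $\mathrm{grad}\,D_p(q)$ to $T_qN$, yields both directions of the equivalence at once. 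This covers KL on exponential/mixture-type families, i.e.\ the canonical Bregman divergence of a dually flat structure, which is the setting in which the proposition is invoked in \Cref{sec:ig}.

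The genuine gap is in your second paragraph. Eguchi's relations use only the second- and third-order jets of $D$ \emph{on the diagonal} $p=q$; they determine $(g,\Gamma,\Gamma^*)$ but constrain nothing about the function $D(p\,\|\,\cdot)$ when $p$ is at finite separation from $q$, so they cannot ``pin down'' $\mathrm{grad}\,D_p(q)$ as the velocity at $q$ of the $m$-geodesic from $p$. In fact, for a fixed dualistic structure there are many divergences inducing it, and for a generic such divergence the stationarity--orthogonality equivalence fails; the gradient identity you need is essentially the \emph{defining} property of the canonical divergence in the Ay--Amari construction (an inverse-exponential-map/gradient condition), not a consequence of mere compatibility with $(g,\Gamma,\Gamma^*)$. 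So as a proof at the stated level of generality (``a manifold equipped with a divergence $D$''), the argument is incomplete: either restrict to dually flat manifolds with the canonical/Bregman divergence---your first paragraph, which suffices for this paper---or explicitly assume/invoke the canonical-divergence framework of the cited source, where the gradient property is built in rather than derived from the third-order expansion.
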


\subsection{Related Works in Operator Factorization}\label{sec:related}

\paragraph{SVD, TRP, DLRT}\label{sec:euclidean}
The first class of approaches are based on Euclidean projection of matrices onto their low-rank variants, which by Eckart Youngs Theorem is equivalent to SVD:
\begin{equation}
\operatorname{proj-\|\cdot\|_2}_{\M_{<r}}(\tilde\theta) = \argmin_{\theta\in\M_{<r}} \|\theta-\tilde\theta\|^2_2.    
\end{equation}
There have been a multitude of papers on various variants of such a technique starting with \cite{psichogios1994svd} and many since \cite{maison2023compression,chen2018groupreduce, noach2020compressing, denton2014exploiting, lebedev2014speeding, jaderberg2014speeding, sainath2013low}. 
This approach has been successful in compressing models and sometimes even improving generalization by reducing overfitting during training \cite{winata2020lightweight,phan2020stable}.

These approaches have also been adapted to be used during training via iterative projection \cite{TRP, LRT, OIALR}.
The main benefit of using SVD directly is that minimizers can be found efficiently separately per layer. 
For classical SVD, the rank has to be provided in advance, or similar to iterative methods can be chosen based on the values directly. However, beyond heuristics it is not clear how to choose the rank. Both the distribution and size of singular values differs significantly amongst layers \cite{DBLP:journals/corr/abs-2407-04797} and in transformers is significantly different between attention and MLP weights \citep{OATS,zhang2025lowrank}.
\paragraph{FWSVD, TFWSVD, RankDyna}\label{sec:fisher}
The next class of approaches we are going to consider is based precisely on the discussion of \Cref{sec:compression}, with methods approximating the information projection:
\begin{align}\label{eq:KL_projection}
 \operatorname{proj-KL}_{\M_{<r}}(\tilde\theta) &= \argmin_{\theta\in\M_{<r}} \;\operatorname{KL}\left(p_{\tilde\theta} \mid p_{\theta}\right) \approx \argmin_{\theta\in\M_{<r}} \frac{1}{2}(\theta-\tilde\theta)^\top \mathcal{I}(\tilde\theta) (\theta-\tilde\theta) + \mathcal{O}((\theta-\tilde\theta)^3). 
\end{align}
However, computing the full FIM is computationally prohibitive due to its quartic complexity with respect to the weight matrix size. Thus, approximations must be used. Methods like Fisher-Weighted SVD (FWSVD) \citep{FWSVD} and its variants like TFWSVD \citep{TFWSVD}, along with the iterative projection method RankDyna \citep{RankDyna}, implicitly approximate this KL projection, although they often frame the FIM as representing parameter ``importance'' \citep{molchanov2019importance} related to the loss function.  These approaches can be viewed as attempting to minimize the KL divergence under significant approximations: TFWSVD diagonalizes the fisher information $\mathcal{I}(\theta)\approx\operatorname{diag}\;\mathcal{I}(\theta):=\widehat{\mathcal{I}}$, decoupling layers and discarding potentially crucial intra-layer interactions. But even with a diagonal FIM, the resulting weighted SVD problem remains computationally expensive \citep{srebro2003weighted}. To address this, FWSVD further approximates the diagonal FIM by summing the diagonal elements column-wise, making the approximation identical for each row: 
$\widehat{\mathcal{I}}(\theta_k)_{ij} \approx \operatorname{diag}\sum_j\widehat{\mathcal{I}}(\theta_k)_{ij}:={\widetilde{\mathcal{I}}}(\theta_k)_{ij}$ for layer $k$ parameters $\theta_k = \operatorname{vec}{W^k}$. As in the Euclidean case, the rank has to be provided in advance or chosen based on the values of the singular values, although selection criteria do exist \citep{hofstee2024fisher}. Analogously, importance based weighting \citep{molchanov2019importance} can be described through this lens.

As the overarching goal is to minimize the loss $\operatorname{KL}\left(p_{\text{em}} \mid p_{\theta}\right)$, we may attempt to instead expand the expression above around $\tilde\theta$, proposed by \citep{molchanov2019importance}. While at the minimum the first order term is zero, in practice it is not exactly equal to zero and the approximation itself may not be accurate. 
\begin{align}\label{eq:importance}
 \operatorname{proj-KL_{em}}_{\M_{<r}}(\tilde\theta) &= \argmin_{\theta\in\M_{<r}} \;\operatorname{KL}\left(p_{\text{em}} \mid p_{\theta}\right) \\ &\approx \argmin_{\theta\in\M_{<r}}  \;(\theta-\tilde\theta)^\top \nabla_\theta\operatorname{KL}\left(p_{\text{em}} \mid p_{\theta}\right) \nonumber \\
 &\qquad\qquad+ \frac12(\theta-\tilde\theta)^\top \mathcal{I}_{\text{em}}(\tilde\theta) (\theta-\tilde\theta)\nonumber +  \mathcal{O}((\theta-\tilde\theta)^3). \nonumber
\end{align}

\paragraph{ASVD, CALDERA, OATS}
Due to the inherent cost of evaluating the FIM, many works \cite{ASVD,SVDLLM,OATS} have instead focused on ensuring agreement of intermediary (post-) activations, turning it into a proxy for the full information distance. Distributions of intermediary activations are quite particular, oftentimes possesing significant outliers \citep{OATS,ASVD}. If we denote $\theta_i\in\mathbb{R}^{n_i\times n_{i+1}}$ and $X_{i-1}$ as precomputed post-activations or some statistic of post-activations, the approach is to minimize the average reconstruction error, which from the point of view of information geometry corresponds to finding projection under per-layer gaussian density assumption, with $p^{\mathcal{N}}_{\theta}(\cdot) = \prod_i{\mathcal{N}}(\cdot | X_i, I_{n_i\times n_i})$:
\begin{equation}
\operatorname{proj-KL}^{\mathcal{N}}_{\M_{<r}}(\tilde\theta) = \argmin_{\theta\in\M_{<r}} \sum_i \|\theta_i X_{i-1}-\tilde\theta_i X_{i-1}\|^2_2 = \argmin_{\theta\in\M_{<r}}\;\operatorname{KL}\left(p^{{\mathcal{N}}}_{\theta} \mid p^{{\mathcal{N}}}_{\tilde\theta}\right)   
\end{equation}
Taking on such a view raises a question: is such a gaussian assumption the right one? Oftentimes post-activations cluster around multiple points (e.g. for different classes), or may contain outliers. As before, the rank has to be provided in advance or chosen based on the value of the singular values. Other heuristics also exist, e.g. \cite{ASVD} performs an expensive binary search for each rank depending on the accuracy drop. 

\begin{figure}[t!]
    \centering
    \begin{subfigure}[b]{0.48\textwidth}
        \centering
        \includegraphics[width=\textwidth]{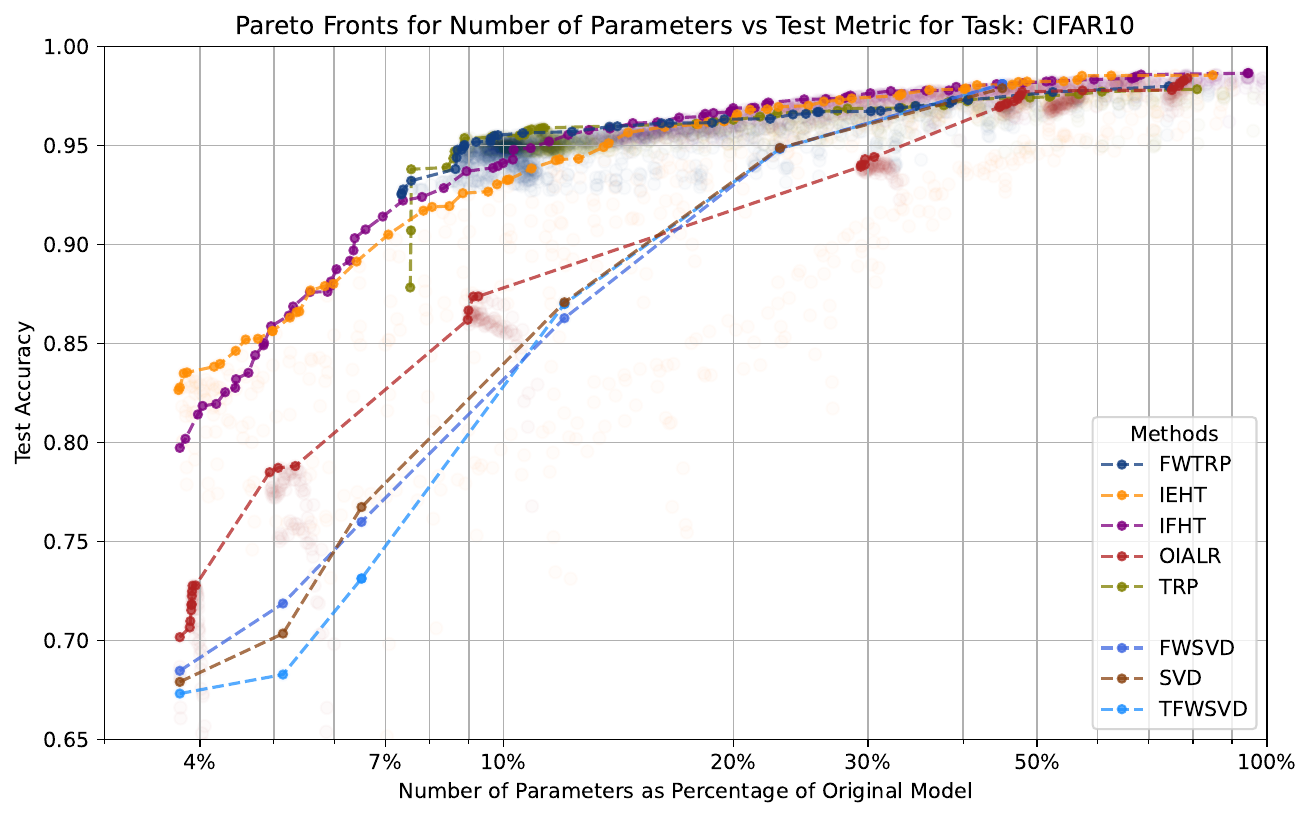} %
        \caption{CIFAR-10 classification.}
        \label{fig:cifar10}
    \end{subfigure}
    \hfill
    \begin{subfigure}[b]{0.48\textwidth}
        \centering
        \includegraphics[width=\textwidth]{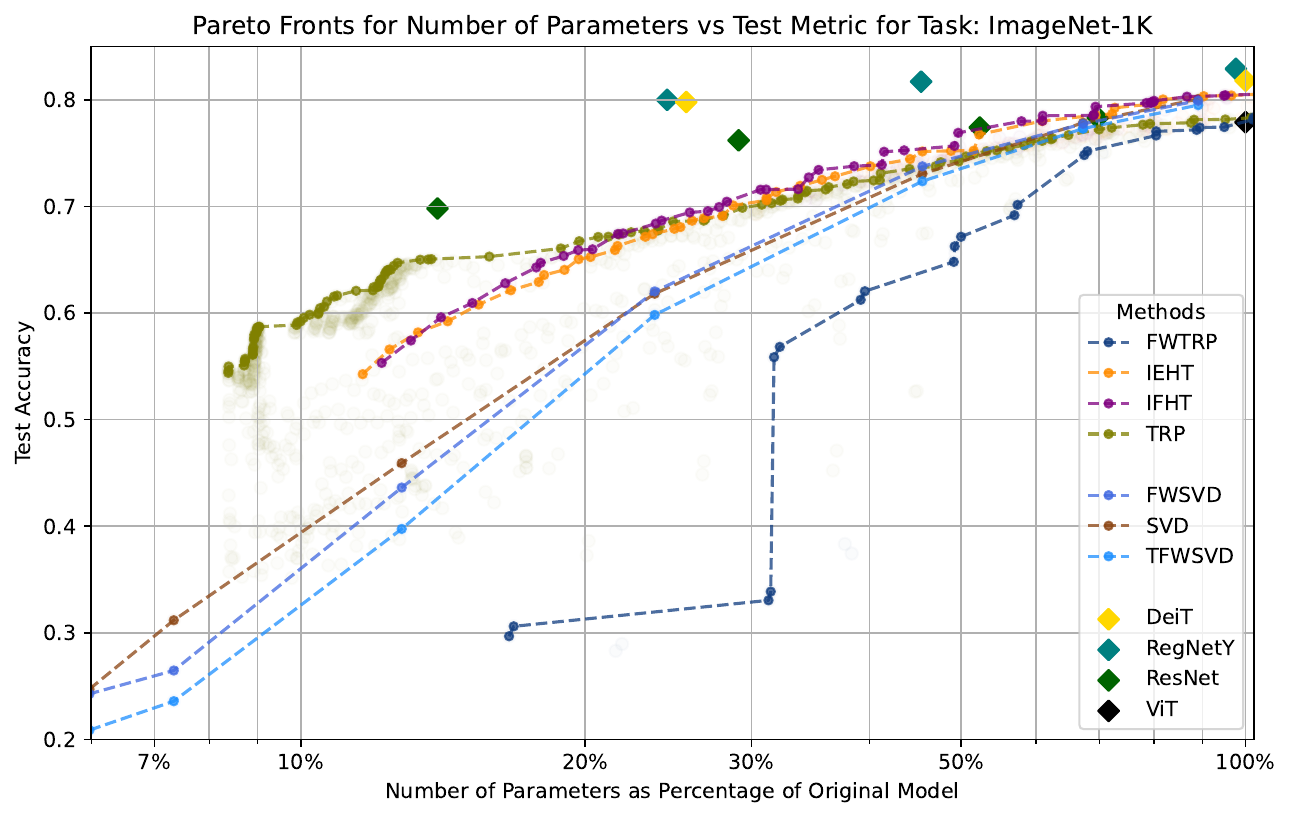} %
        \caption{ImageNet-1k classification.}
        \label{fig:imagenet}
    \end{subfigure}
    \caption{Pareto fronts for compression methods in \Cref{sec:compression} in for ViT-B/16 for image classification. Diamonds show performance reported by \citep{touvron2021trainingdataefficientimagetransformers}.}
    \label{fig:paretos_img_class}
\end{figure}

\section{Sparsify During Training as Iterative Hard Thresholding}\label{sec:IHT}
In the previous section we discussed information geometric aspects involved in the \emph{projection} step. However, the question \emph{rank selection} is addressed only heuristically. Based on the observation that training/finetuning bottlenecked networks to good accuracy is not always possible, we are motivated to consider \emph{sparsify-in-training} methods, iteratively reducing the rank. 
\paragraph{The Deep Linear Network Example}
To illustrate the rationale behind iterative rank reduction, we consider the case of deep linear networks.  While simple, these models have proven valuable for studying overparameterization, revealing insights into implicit biases and optimization benefits \citep{menon2024geometrydeeplinearnetwork}.  A key observation in overdetermined problems is the tendency for solutions to exhibit low-rank structures, a phenomenon captured by analyzing the entropy of low-rank solutions within the solution space, with low-rank solutions possessing significantly higher entropy. However, convergence to such a low-rank minimum is not guaranteed \citep{cohen2023deep}, even if they are more likely. Therefore, iterative rank reduction methods can be interpreted as a means of guiding the optimization process towards these desirable low-rank minima, effectively ``massaging'' the solution towards a compressed representation.

\paragraph{Iterative Hard Thresholding in Low Rank Approximation}
Based on the observations, that iterative rank reduction \citep{OIALR,LRT} is a projection onto the low-rank manifold and that hard thresholding of singular values is a proximal step on the rank function, we consider the resulting minimization problem and proximal gradient method used to optimize it. Similar observations, have been made in the context of pruning \citep{wu2024the}.
Low-rank approximation is a fundamental problem in various fields with a rich literature of algorithms to address it, e.g. in the context of matrix completion \citep{Vu_2022}. Because the rank of a matrix is not a continuous function, many methods relax this constraint to allow for continuous optimization, including nuclear norm minimization \citep{candes2012exact,rao2015forward} as a convex relaxation; via low-rank factorization \citep{jain2013low,sun2016guaranteed}, resulting in a non-convex least-squares problem; or considering a rank constrained formulation approached via singular value thresholding \citep{chunikhina2014performance,tanner2013normalized}.

While the methods are effective, they often operate under simplifying assumptions on the objective, which make them non-applicable to the setting of neural networks, necessitating an extension of the existing analyses.
The problem we are going to consider here is going to be of the form (for the sake of simplicity here presented as for a single matrix, but extends naturally beyond):
\begin{equation}\label{eq:rank_regularized}
\min_{W\in \mc{M}} \;\;\mathcal{L}(W) + \lambda \operatorname{rank}(W).
\end{equation}
Now, the classical proximal gradient method for \Cref{eq:rank_regularized} consists of iterative steps of the form 
$W_{n+1} = \operatorname{prox}_{\alpha_n\lambda\operatorname{rank}(\cdot)}\left(W_{n}-\alpha_n \nabla \mc{L}\left(W_n\right)\right).
$
The power of the proximal method comes from the fact that normally the proximal operator of a function is expensive to compute. Intuitively proximal operator of rank involves projection on the lower rank manifold, which is extremely non-convex and more so has empty interior. However, thanks to Eckart Youngs Theorem we have a closed form way of performing such a projection \citep{hiriart2013variational,hiriart2013eckart}. More generally however, we are going to consider a (mirror) proximal gradient method, with divergences (\Cref{eq:bregman_div}), instead of the $l_2$ norm, with updates of the form 
\begin{equation}\label{eq:step_prox}
    W_{n+1} \in \underset{W \in \mathcal{M}}{\operatorname{argmin}}\left\{\frac{1}{\alpha_n}D_{F_n}\left(W, W_n\right)+\left\langle W, \nabla \mc{L}\left( W_n\right)\right\rangle+\lambda\operatorname{rank}(W)\right\} .
\end{equation}
Motivation for such updates is based on discussion of \Cref{sec:fisher}: information induced distances are of more interest. Ideally, instead of $D_{F_n}\left(W, W_n\right)$ we would consider $\operatorname{KL}\left(p(W_n) \;|\; p(W)\right)$, however this is captured sufficiently well by the FIM, for which $F_n = \frac12\|\cdot\|^2_{\mc{I}(W_n)}$.

\subsection{Proximal Gradient Algorithm Analysis}
In this section, we consider the problem in \Cref{eq:rank_regularized} and analyze the iterative scheme of \Cref{eq:step_prox}. 
To show convergence, we assume the following: 
\begin{itemize}[nosep,leftmargin=*]
    \item $\mc{L}$ is a bounded from below Fréchet differentiable function with Lipschitz continuous gradient, i.e. there exists $L_{\nabla_\mc{L}} \geq 0$ such that $\|\nabla \mc{L}(W)-\nabla \mc{L}(W^\prime)\| \leq L_{\nabla_\mc{L}}\|W-W^\prime\|$. Further assume that $\mc{L}$ is sub-analytic and coercive. 
    \item Each $F_n: \mathbb{R}^m \rightarrow \mathbb{R}$, assumed to be $\sigma_n$-strongly convex, Fréchet differentiable and such that $\nabla F_n$ is $L_{\nabla F_n}$-Lipschitz continuous with $L_{\nabla F_n}\leq\overline{L_{\nabla F}}$. %
\end{itemize}
\begin{nb}
The lipshitzness assumption on $\mc{L}$ is not necessary, and can be generalized to a local version \cite{jia2023convergence}. Coercivity is not restrictive, as oftentimes weight decay is used. Sub-analyticity condition on $\mc{L}$ is not restrictive, as by e.g. \cite[Theorem E.3]{shumaylov2024weakly}, if $\mc{N}$ is a neural network with continuous, piecewise analytic activations with finitely many pieces (e.g., \texttt{ReLU,sigmoid}), then $\mc{N}$ is sub-analytic.
\end{nb}
\begin{lemma}[\citep{hiriart2013variational}]\label{lem:rank}
    The rank function $\operatorname{rank}(W)$ is proper, lower semicontinuous, bounded from above and below. Furthermore, it is semi-algebraic and therefore K\L. 
\end{lemma}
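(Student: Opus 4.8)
The plan is to verify each asserted property of $\operatorname{rank}\colon\mathbb{R}^{m\times n}\to\mathbb{R}$ directly from its definition, the only external input being the theorem that semi-algebraic functions satisfy the Kurdyka–Łojasiewicz inequality. First note that $\operatorname{rank}(W)$ always lies in the finite set $\{0,1,\dots,\min(m,n)\}$; in particular it is everywhere finite (never $\pm\infty$), hence proper, and it is bounded below by $0$ and above by $\min(m,n)$. These claims are immediate.

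For lower semicontinuity I would show that every sublevel set $S_{\le r}:=\{W:\operatorname{rank}(W)\le r\}$ is closed. Using the classical fact that $\operatorname{rank}(W)\le r$ if and only if all $(r+1)\times(r+1)$ minors of $W$ vanish, $S_{\le r}$ is the common zero set of finitely many polynomials, hence an algebraic variety, hence closed. Equivalently: if $W_k\to W$ with $\operatorname{rank}(W_k)\le r$, then each $(r+1)$-minor, being a polynomial and therefore continuous, vanishes along the sequence and so vanishes at $W$, giving $\operatorname{rank}(W)\le\liminf_k\operatorname{rank}(W_k)$, which is lower semicontinuity.

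Semi-algebraicity then follows by inspecting the graph. Writing $S_{\le-1}=\emptyset$, we have $\operatorname{graph}(\operatorname{rank})=\bigcup_{r=0}^{\min(m,n)}\bigl(S_{\le r}\setminus S_{\le r-1}\bigr)\times\{r\}$. Each $S_{\le r}$ is cut out by polynomial equalities, so each set difference is a Boolean combination of polynomial conditions and hence semi-algebraic; a finite union of products of semi-algebraic sets with singletons is again semi-algebraic, so $\operatorname{graph}(\operatorname{rank})$ is semi-algebraic, which is exactly the definition of $\operatorname{rank}$ being a semi-algebraic function. (Alternatively, $\operatorname{rank}=\sum_{r\ge 1}\mathbf{1}_{\,\mathbb{R}^{m\times n}\setminus S_{\le r-1}}$ exhibits it as a finite sum of indicators of semi-algebraic sets.) Finally, every semi-algebraic function is subanalytic (indeed definable in an o-minimal structure), and lower semicontinuous subanalytic functions satisfy the KŁ property at every point of their domain by the Kurdyka / Bolte–Daniilidis–Lewis theorem, which I would invoke directly to conclude.

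I do not expect a real obstacle here: every step is routine once the determinantal-variety description of the sublevel sets is recalled, and the single non-elementary ingredient — that subanalytic/semi-algebraic functions are KŁ — is a standard theorem used as a black box. The only point warranting a word of care is the precise meaning of ``proper'' for an extended-real-valued function, but since $\operatorname{rank}$ is finite-valued this is vacuous.
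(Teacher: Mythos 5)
Your proposal is correct. Note, however, that the paper offers no proof of this lemma at all: it is stated as a citation to \citep{hiriart2013variational}, where the variational properties of the rank function (closedness of the determinantal sublevel sets $\{W:\operatorname{rank}(W)\le r\}$, hence lower semicontinuity, and semi-algebraicity) are established, and the implication ``semi-algebraic $\Rightarrow$ K\L'' is taken from the standard Bolte--Daniilidis--Lewis/Kurdyka theory that the paper invokes via \citep{boct2016inertial}. Your blind argument reconstructs exactly these standard facts in a self-contained way: boundedness and properness from finite integer values, lower semicontinuity from continuity of the $(r+1)\times(r+1)$ minors, semi-algebraicity from the decomposition of the graph into Boolean combinations of algebraic sets times singletons, and K\L\ as a black-box consequence. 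All steps are sound (the only cosmetic point is that lower semicontinuity is most cleanly phrased as closedness of all sublevel sets, which your minor argument already gives). What the citation buys the paper is brevity and a pointer to the same reference that supplies the Eckart--Young-type proximal formula used later (\citep{hiriart2013eckart}); what your version buys is that the lemma no longer depends on an external source beyond the semi-algebraic-implies-K\L\ theorem.
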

\begin{prop}[Proof in \Cref{sec:proof_prop}]\label{prop:convergence}
    Assume that for all $n$, $0<\underline{\alpha}\leq\alpha_n\leq\frac{\sigma_n}{L_{\nabla_{\mc{L}}}}$ and assumptions above hold. Then, the following are true:
    \begin{itemize}
        \item $\left(\mathcal{L} + \lambda \operatorname{rank}\right)(W_n)$ is non-increasing and convergent.
        \item $\sum_n\|W_{n+1}-W_{n}\| < +\infty$.
        \item $W_n$ converges, with $\lim_{n\to\infty} W_n = W^*\in {\operatorname{crit}{\left(\mathcal{L} + \lambda \operatorname{rank}\right)}}$.
        \item For $F_n(W) = \frac12\|\mc{I}_n^{1/2}W\|^2_2$ with $\mc{I}_n\to \mc{I}$ positive definite, the smallest non-zero singular value satisfies $\sigma_{\text{min}}\left(\mc{I}^{1/2}W^*\right)\geq \sqrt{\underline{\alpha} \lambda}$. And thus $\sigma_{\text{min}}\left(W^{*}\right)  \geq \sqrt{\sigma_{\text{max}}\left(\mc{I}\right)\underline{\alpha} \lambda}$.
    \end{itemize}
\end{prop}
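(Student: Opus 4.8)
The plan is to recognize \Cref{eq:step_prox} as a Bregman (mirror) proximal gradient iteration applied to the objective $\Psi := \mathcal{L} + \lambda\operatorname{rank}$, and to invoke a convergence framework for nonconvex nonsmooth problems of the kind developed around the Kurdyka--\L{}ojasiewicz (K\L) inequality. The first three bullets then follow a standard three-ingredient recipe: (i) a \emph{sufficient decrease} inequality, $\Psi(W_{n+1}) + c\|W_{n+1}-W_n\|^2 \leq \Psi(W_n)$ for some $c>0$; (ii) a \emph{relative error} (subgradient) bound, $\operatorname{dist}(0,\partial\Psi(W_{n+1})) \leq C\|W_{n+1}-W_n\|$; and (iii) the K\L\ property of $\Psi$, which holds because $\mathcal{L}$ is sub-analytic (hence K\L\ on bounded sets) and $\operatorname{rank}$ is semi-algebraic hence K\L\ by \Cref{lem:rank}, and a sum of such functions is K\L. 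Coercivity of $\mathcal{L}$ (plus boundedness of $\operatorname{rank}$) confines the iterates to a compact set, so all these properties apply uniformly. I would first establish (i): using $\sigma_n$-strong convexity of $F_n$, the definition of $W_{n+1}$ as a minimizer of the subproblem gives $\frac{1}{\alpha_n}D_{F_n}(W_{n+1},W_n) + \langle W_{n+1}-W_n, \nabla\mathcal{L}(W_n)\rangle + \lambda\operatorname{rank}(W_{n+1}) \leq \lambda\operatorname{rank}(W_n)$; combining $D_{F_n}(W_{n+1},W_n)\geq \frac{\sigma_n}{2}\|W_{n+1}-W_n\|^2$ with the descent lemma for $\mathcal{L}$ (Lipschitz gradient $L_{\nabla_\mathcal{L}}$) and the step-size bound $\alpha_n \leq \sigma_n/L_{\nabla_\mathcal{L}}$ yields a net decrease of order $(\frac{\sigma_n}{\alpha_n} - L_{\nabla_\mathcal{L}})\|W_{n+1}-W_n\|^2 \geq 0$, hence monotonicity and, telescoping with boundedness below, $\sum_n\|W_{n+1}-W_n\|^2 < \infty$.

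For (ii), I would write the first-order optimality condition of the subproblem: $0 \in \frac{1}{\alpha_n}(\nabla F_n(W_{n+1}) - \nabla F_n(W_n)) + \nabla\mathcal{L}(W_n) + \lambda\,\partial\operatorname{rank}(W_{n+1})$, so that $\nabla\mathcal{L}(W_{n+1}) - \nabla\mathcal{L}(W_n) - \frac{1}{\alpha_n}(\nabla F_n(W_{n+1}) - \nabla F_n(W_n)) \in \partial\Psi(W_{n+1})$; using $L_{\nabla_\mathcal{L}}$-Lipschitzness of $\nabla\mathcal{L}$, the uniform Lipschitz bound $\overline{L_{\nabla F}}$ on $\nabla F_n$, and $\alpha_n \geq \underline{\alpha}$ bounds this subgradient by $(L_{\nabla_\mathcal{L}} + \overline{L_{\nabla F}}/\underline{\alpha})\|W_{n+1}-W_n\|$. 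With (i), (ii), (iii) and boundedness of the iterates in hand, the abstract convergence theorem (e.g. of Attouch--Bolte--Svaiter / Bolte--Sabach--Teboulle type, adapted to the variable-metric Bregman setting as in the cited \cite{jia2023convergence}) upgrades $\sum\|W_{n+1}-W_n\|^2<\infty$ to $\sum\|W_{n+1}-W_n\|<\infty$, forcing $(W_n)$ to be Cauchy, hence convergent to some $W^*$; passing to the limit in the optimality condition (using $W_{n+1}-W_n\to 0$ and outer semicontinuity of $\partial\Psi$ together with closedness of $\operatorname{rank}$'s level behavior) gives $0\in\partial\Psi(W^*)$, i.e. $W^*\in\operatorname{crit}(\mathcal{L}+\lambda\operatorname{rank})$.

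For the last bullet, specialize $F_n(W) = \frac12\|\mathcal{I}_n^{1/2}W\|_2^2$, so $D_{F_n}(W,W_n) = \frac12\|\mathcal{I}_n^{1/2}(W-W_n)\|_2^2$ and the subproblem becomes, in the variable $V = \mathcal{I}_n^{1/2}W$, a Euclidean proximal step on the rank penalty (rank is invariant under the invertible map $\mathcal{I}_n^{1/2}$), whose solution is given by singular value hard thresholding at level $\sqrt{2\alpha_n\lambda}$ (equivalently $\sqrt{\alpha_n\lambda}$ up to the factor-of-two convention in $F_n$) by the Eckart--Young characterization of $\operatorname{prox}_{\alpha\lambda\operatorname{rank}}$ \cite{hiriart2013variational,hiriart2013eckart}. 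Consequently every nonzero singular value of $\mathcal{I}_n^{1/2}W_{n+1}$ is at least $\sqrt{\underline{\alpha}\lambda}$; taking $n\to\infty$ with $\mathcal{I}_n\to\mathcal{I}$ positive definite and $W_n\to W^*$, continuity of singular values gives $\sigma_{\min}(\mathcal{I}^{1/2}W^*)\geq\sqrt{\underline{\alpha}\lambda}$, and then the bound $\sigma_{\min}(\mathcal{I}^{1/2}W^*) \leq \sigma_{\max}(\mathcal{I}^{1/2})\sigma_{\min}(W^*) = \sqrt{\sigma_{\max}(\mathcal{I})}\,\sigma_{\min}(W^*)$ rearranges to the stated inequality on $\sigma_{\min}(W^*)$.

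I expect the main obstacle to be the relative-error step (ii) in the genuinely variable-metric setting: one must track that the metrics $F_n$ are changing between iterations, so the telescoping and the subgradient bound pick up extra terms involving $\nabla F_{n+1}-\nabla F_n$ evaluated at the same point; controlling these requires either assuming $F_n$ stabilizes fast enough (summable variation, consistent with $\mathcal{I}_n\to\mathcal{I}$) or absorbing them using the uniform strong-convexity/Lipschitz bounds $\sigma_n$ and $\overline{L_{\nabla F}}$. A secondary subtlety is verifying that the subproblem in \Cref{eq:step_prox} actually attains its minimum (coercivity of $D_{F_n}(\cdot,W_n)$ plus lower semicontinuity of $\operatorname{rank}$ from \Cref{lem:rank} handles this) and that $\partial\operatorname{rank}$ behaves well enough at the limit; the latter is where lower semicontinuity of $\operatorname{rank}$ and the fact that rank can only drop in the limit are used to ensure $W^*$ is a genuine critical point rather than merely a point where the limiting subdifferential is empty.
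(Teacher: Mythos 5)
Your proposal is correct and takes essentially the same route as the paper: the identical Bregman sufficient-decrease inequality (strong convexity of $F_n$, the descent lemma, and the step-size bound $\alpha_n\le\sigma_n/L_{\nabla\mathcal{L}}$), the K\L\ property of $\mathcal{L}+\lambda\operatorname{rank}$ via sub-analyticity/semi-algebraicity, an appeal to the standard abstract K\L\ convergence machinery (the paper invokes \cite{boct2016inertial} with $\beta=0$ where you cite the same family of results, and you additionally spell out the relative-error subgradient bound that this machinery uses), and the Eckart--Young hard-thresholding argument in the $\mathcal{I}_n^{1/2}$-transformed variable followed by continuity of singular values for the last bullet. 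Note that your final rearrangement, $\sigma_{\min}(W^*)\ge\sqrt{\underline{\alpha}\lambda/\sigma_{\max}(\mathcal{I})}$, actually agrees with the inequality chain in the paper's own proof, whose typeset statement places $\sigma_{\max}(\mathcal{I})$ as a product rather than a quotient under the square root.
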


\Cref{prop:convergence} establishes convergence and a maximal rank of the underlying solution. However, in practice using such a method implies computing SVD of all the weights at each iteration, making it prohibitively expensive in practice for large networks. We can instead provide a different claim for e.g. \cite{OIALR} in \Cref{sec:app_oialr_theory}.
\section{Numerical Illustrations}
While theoretical results of \Cref{sec:IHT} highlight that the proposed iterative thresholding approach is coherent, it does not provide us with any guidelines on how to select the step-sizes $\alpha_n$, or provide us with information of whether using information divergences should result in better models. For this reason, we consider what happens numerically, asking the question of what has the largest effect on compressed model accuracy. We summarize all the methods considered for comparison in \Cref{tab:method_comparison}, and further experimental details are provided in \Cref{ap:details}.

\subsection{Vision Transformer}\label{sec:exp_vit}
For our initial experiments, we focused on compressing a pre-trained Vision Transformer (ViT)-B/16 (86.6M) model \citep{dosovitskiy2021imageworth16x16words} trained on CIFAR10 \citep{krizhevsky2014cifar} and one trained on ImageNet-21k \citep{ridnik2021imagenet21kpretrainingmasses,5206848} at a resolution of 224x224. We then finetune during compression on ImageNet 2012 \citep{russakovsky2015imagenetlargescalevisual}.
\paragraph{Information Projection: Zero-shot vs Fine-tuned}
\begin{table}
    \centering
    \caption{Showcasing compression algorithms applied to MLP layers based on information and standard projections for the setting of \Cref{sec:exp_vit}. Using information projections is necessary for zero-shot performance, while fine-tuned performance is nearly identical for the two methods.}
    \label{tab:combined}
    \resizebox{.95\linewidth}{!}{%
    \begin{tabular}{cccccccc|c}
        & \% of parameters & 35.7\% & 36.6\% & 37.4\% & 40.9\% & 47.8\% & 61.5\% & Full Model \\ \hline
        \multirow{2}{*}{Zero-shot} & 
        SVD & 0.0997 & 0.1005 & 0.1054 & 0.1302 & 0.4126 & 0.8601 & \multirow{2}{*}{0.9562} \\ 
        & FWSVD & \textbf{0.1223} & \textbf{0.1320} & \textbf{0.1199} & \textbf{0.3383} & \textbf{0.6982} & \textbf{0.9147} & \\ \hline
        \multirow{2}{*}{Fine-tuned} & SVD & 0.9237 & 0.9279 & 0.9450 & \textbf{0.9656} & 0.9585 & \textbf{0.9809} & \multirow{2}{*}{0.9878}\\ 
        & FWSVD & \textbf{0.9336} & \textbf{0.9290} & \textbf{0.9541} & 0.9446 & \textbf{0.9736} & 0.9581 & \\ \hline
    \end{tabular}}
\end{table}
\Cref{tab:combined} shows the zero-shot and fine-tuned accuracy results for compressing the MLP layers of the ViT-B/16 model using fixed rank SVD and FWSVD \citep{FWSVD}, for various ranks. The results indicate that employing information-based projections (FWSVD) is crucial for preserving zero-shot performance post-compression. However, once the models are fine-tuned, the performance gap between SVD and FWSVD becomes negligible, although FWSVD in more cases than not still exhibits a slight advantage. This suggests that fine-tuning effectively mitigates the errors from Euclidean projection. We similarly observed this trend across variable rank methods, depicted in \Cref{fig:paretos_img_class}, including the IEHT and IFHT discussed in \Cref{sec:IHT}. This finding underscores that in training information projections have an insignificant effect.
\paragraph{Trainability as the deciding factor for Extreme Compression}
Based on the discussion above, information projection has a significant effect only in zero-shot settings. Previous literature has established that initiating training with low-rank models from the outset hinders effective training \citep{khodak2021initialization}. We hypothesize that this is the most limiting factor for achieving good performance of compressed models. As illustrated in \Cref{fig:paretos_img_class}, there is a significant gap between methods employing iterative compression, and those not. To highlight the role of trainability, we conducted two further ablations:
\paragraph{Sparsify-During-Training Methods:}
    We employed OIALR \cite{OIALR}, as shown in \Cref{fig:cifar10}. While OIALR can recover improved models for the same compression ratios as constant-rank methods, it suffers from non-smooth rank reduction, often involving a large initial cut followed by several smaller ones. This limits the recovery potential of the models. To address this, we implemented the IFHT and IEHT methods from Section \ref{sec:IHT}, which exhibit less severe rank cutoffs.
\begin{figure}[t!]
    \centering
    \begin{subfigure}{0.49\textwidth}
        \centering
        \includegraphics[width=\textwidth]{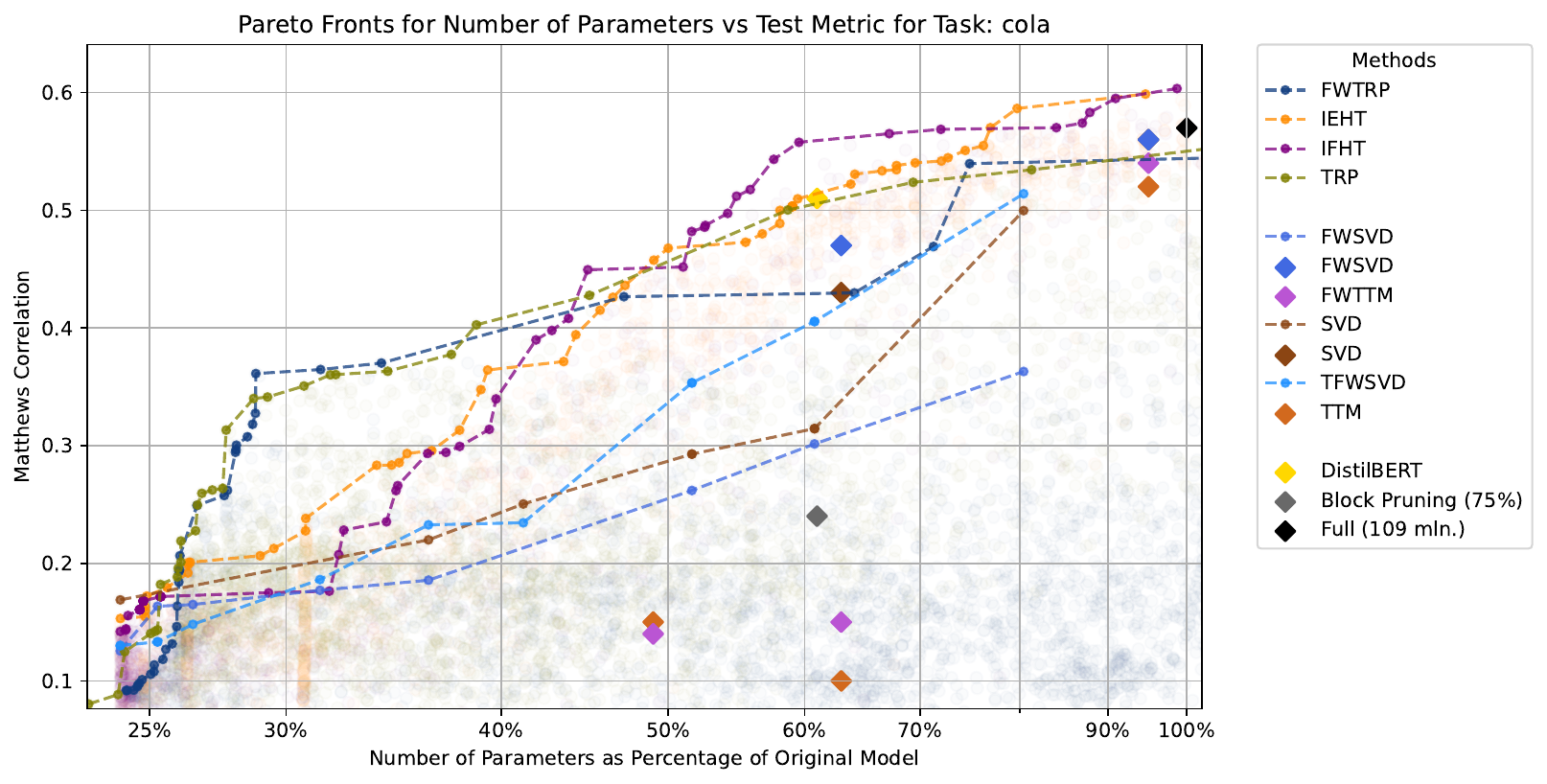}
        \caption{COLA}
    \end{subfigure}
    \hfill
    \begin{subfigure}{0.49\textwidth}
        \centering
        \includegraphics[width=\textwidth]{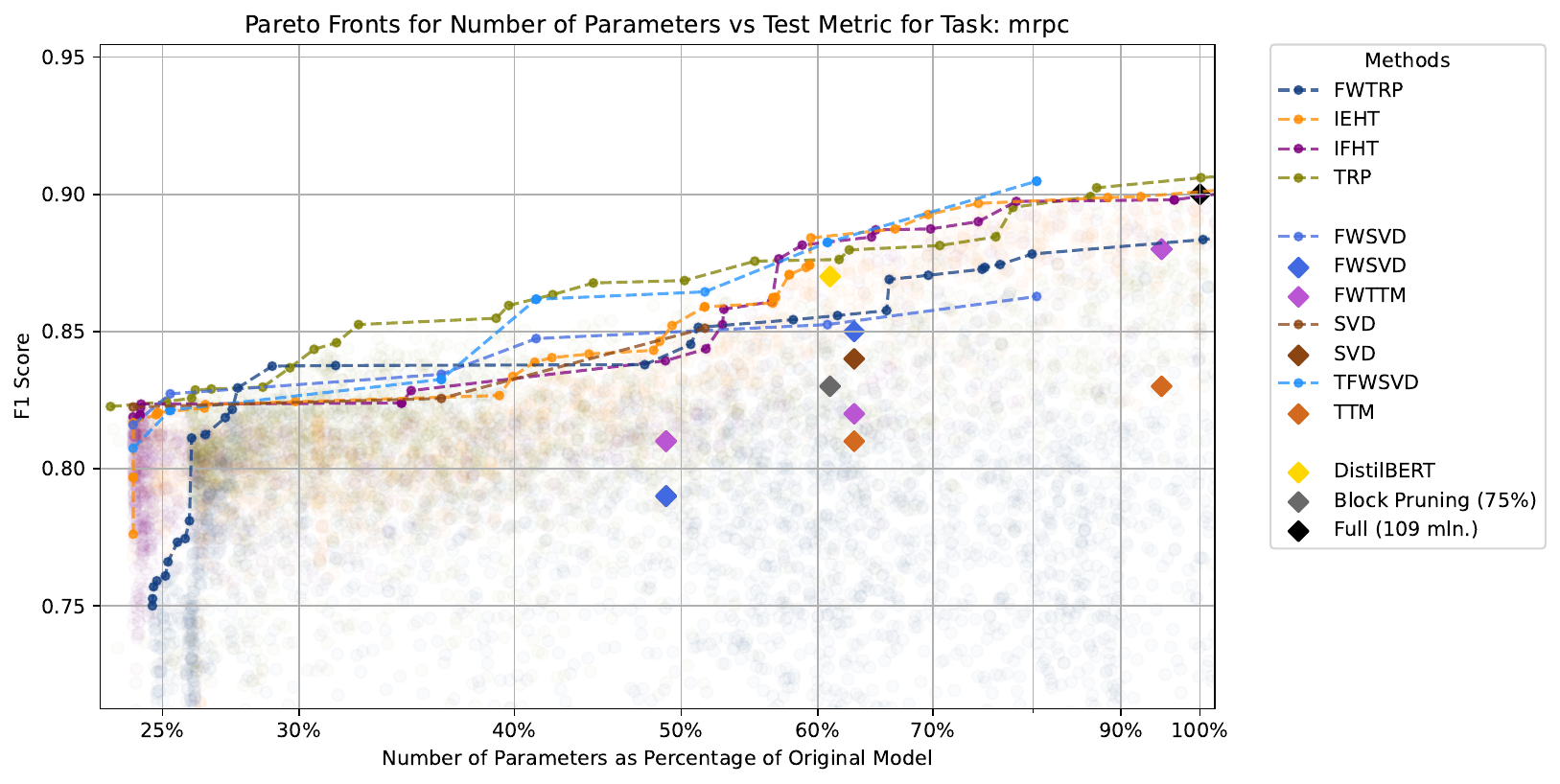}
        \caption{MRPC}
    \end{subfigure}
    \\
    \begin{subfigure}{0.49\textwidth}
        \centering
        \includegraphics[width=\textwidth]{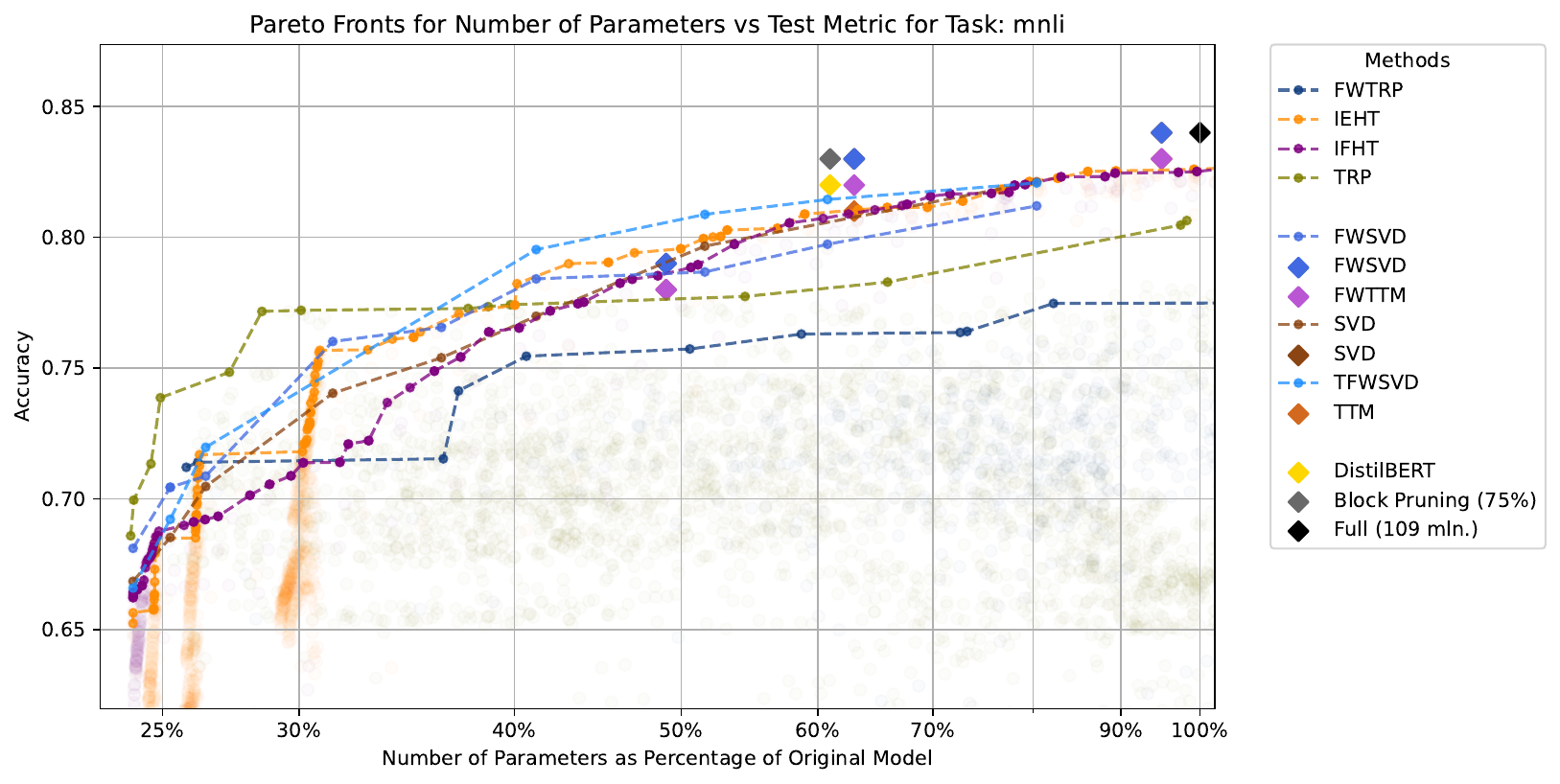}
        \caption{MNLI}
    \end{subfigure}
    \hfill
    \begin{subfigure}{0.49\textwidth}
        \centering
        \includegraphics[width=\textwidth]{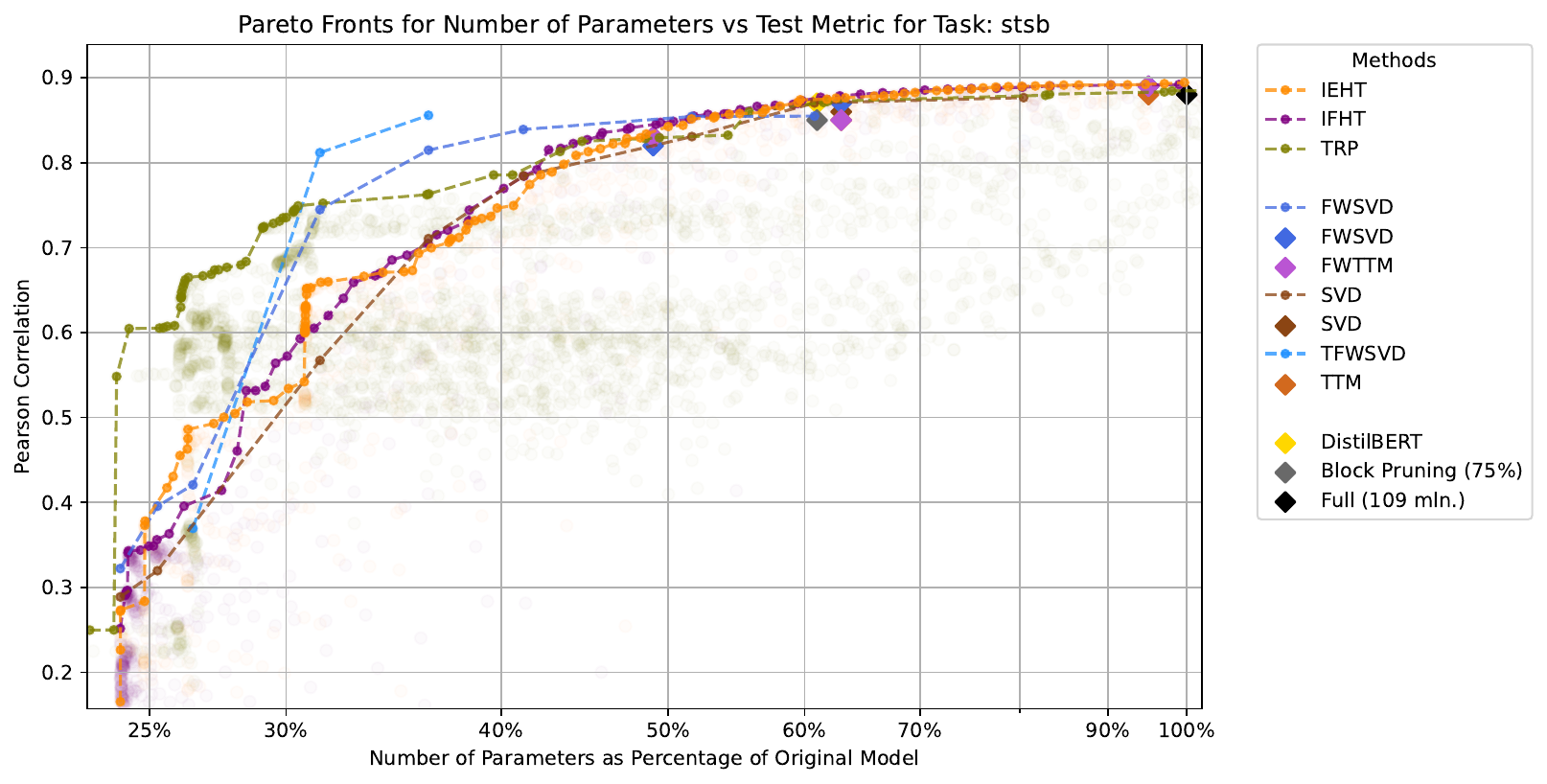}
        \caption{STSB}
    \end{subfigure}
    \caption{Performance of various compression algorithms on BERT for tasks in the GLUE benchmark\label{fig:BERT}. Diamonds show performance reported by \citep{pletenev2023computational}.}
\end{figure}
\paragraph{Different Cutoff Schedules:}

\begin{wrapfigure}{O}{0.48\textwidth} %
\vspace{-1.4em}
    \centering
    \includegraphics[width=.48\textwidth]{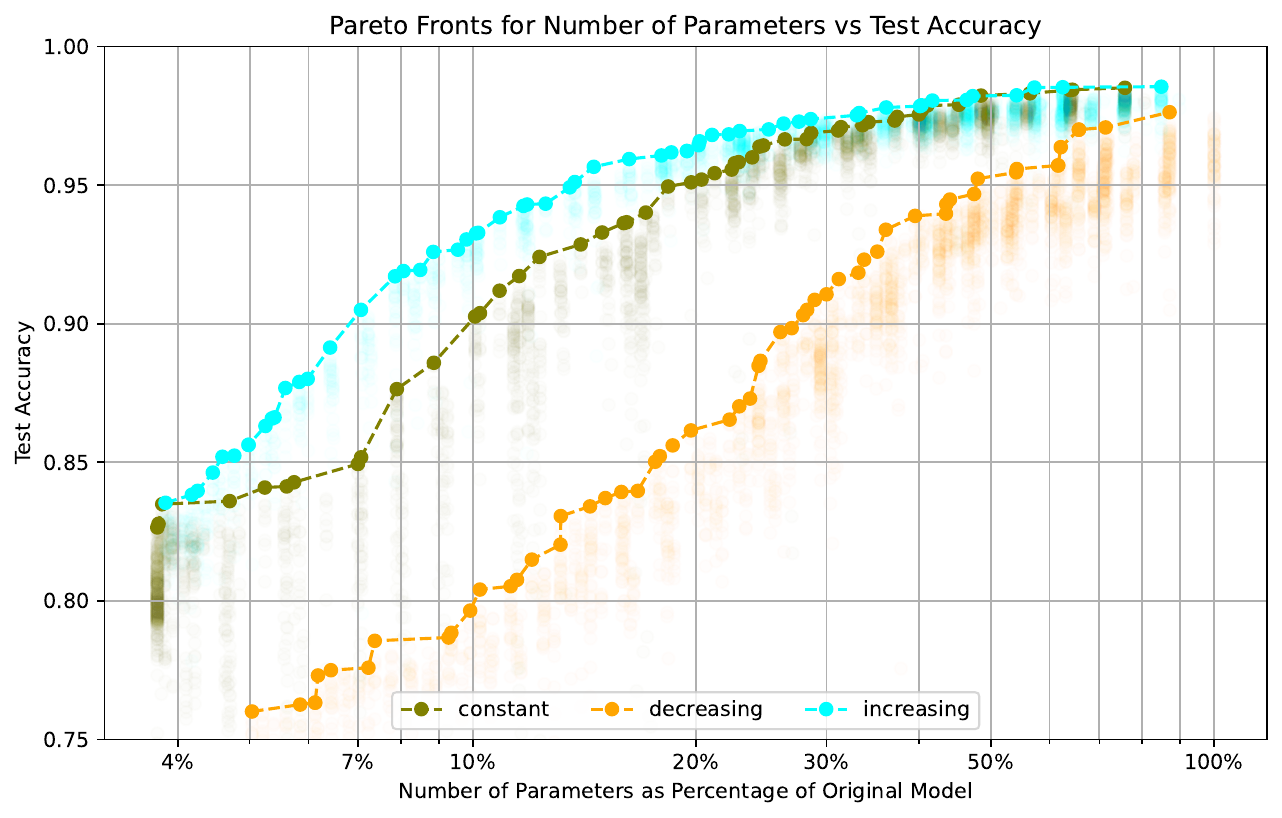} %
    \caption{Ablation study of different bound schedules for CIFAR-10. We find that increasing the amount of error with depth tends to perform best due to simpler trainability.}
    \vspace{-1.5em}
    \label{fig:Pareto}
\end{wrapfigure}
To further emphasize the connection between trainability and compressed performance, we performed an ablation study on various cutoff schedules for the optimal parameter settings. Three schedules were considered: (1) constant with depth, (2) increasing with depth, (3) decreasing with depth.

It is widely known \citep{chen2023which} that earlier layers stabilize more rapidly in training than later layers. Consequently, at each hard thresholding step, earlier layers require more significant adjustments and thus necessitate greater flexibility. We anticipated that the second strategy (increasing with depth) would yield the best performance, while the third strategy (decreasing with depth) would perform the worst. \Cref{fig:Pareto} presents the results of this ablation study, corroborating our hypothesis regarding the dominant influence of trainability on achieving optimal performance for compressed models. In analogy with \citep{RankDyna} we also compared global vs local rank selection strategies in \Cref{ap:global_local}.

\subsection{BERT on GLUE}\label{sec:bert_exp}
We follow \citep{FWSVD,TFWSVD,RankDyna,pletenev2023computational} and test on a subset of tasks in the GLUE benchmark \citep{wang-etal-2018-glue} by compressing a pre-trained BERT model \citep{devlin2019bertpretrainingdeepbidirectional} (104M parameters) using various approaches, with a visualization in \Cref{fig:BERT}. 
We argue that the setting considered here is more akin to zero-shot comparison, as performance of models either quickly diverges, or converges to a non-zero validation loss even without compression when trained for more than $10$ epochs, a phenomenon well-documented in BERT finetuning \citep{mosbach2021on}. This limitation in training duration significantly impacts iterative compression techniques. Specifically, the minimal difference observed between the Pareto fronts of iterative and non-iterative methods in \Cref{fig:BERT} arises because iterative approaches, which rely on refining the model over multiple training iterations, effectively ``massaging'' into a reduced form, become ineffective when prolonged training leads to a decline in test metrics. Consequently, the superior performance of non-iterative methods like FWSVD and TFWSVD in this specific context is primarily attributable to this constraint on training duration. While in scenarios where extended and stable training is feasible, iterative compression methods generally outperform their non-iterative counterparts.

\section{Outlooks}
This paper explored model compression via information geometry and iterative optimization. Our analysis revealed that many compression techniques can be interpreted as approximating information divergences when projecting models onto lower-cost ones, with better approximations resulting in improved zero-shot performance. Our findings suggest however that the choice of projection matters less than the optimization process itself: iterative compression during training plays the dominant role in achieving strong performance post-compression. We note however that in this paper, we have primarily focused on low rank factorization methods applied to medium-sized models, with more complex structures, and approaches overcoming the problem of trainability left for future work.

\bibliography{iclr25/iclr2025_conference}
\bibliographystyle{abbrv} 
\appendix
\section{Appendix: Mathematical Step}
Here, we recap some of the basic notions, primarily following \cite{boct2016inertial}. We denote the Euclidean scalar product and norm as $\langle\cdot,\cdot\rangle$ and $\|\cdot\|$. The finite-dimensional spaces considered here are endowed with the topology induced by the Euclidean norm.

The domain of the function $f: \mathbb{R}^m \rightarrow(-\infty,+\infty]$ is defined by $\operatorname{dom} f=\left\{x \in \mathbb{R}^m\right.$ : $f(x)<+\infty\}$. We say that $f$ is proper if $\operatorname{dom} f \neq 0$. Consider $f: \mathbb{R}^m \rightarrow$ $(-\infty,+\infty]$ proper and lower semicontinuous function. If $x \in \operatorname{dom} f$, we consider the Fr\'echet  subdifferential of $f$ at $x$ as the set
$$
\hat{\partial} f(x)=\left\{v \in \mathbb{R}^m: \liminf _{y \rightarrow x} \frac{f(y)-f(x)-(v, y-x)}{\|y-x\|} \geq 0\right\} .
$$
For $x \notin \operatorname{dom} f$ we set $\partial f(x):=0$. The limiting (Mordukhovich) subdifferential is defined at $x \in \operatorname{dom} f$ by
$$
\partial f(x)=\left\{v \in \mathbb{R}^m: \exists x_n \rightarrow x, f\left(x_n\right) \rightarrow f(x) \text { and } \exists v_n \in \partial f\left(x_n\right), v_n \rightarrow v \text { as } n \rightarrow+\infty\right\}
$$

while for $x \notin \operatorname{dom} f$, one takes $\partial f(x):=\emptyset$. For $f$ convex, both coincide with the convex subdifferential $\partial f(x)=\left\{v \in \mathbb{R}^m: f(y) \geq f(x)+\langle v, y-x\rangle \forall y \in \mathbb{R}^m\right\}$ for all $x \in \operatorname{dom} f$.

If $x \in \mathbb{R}^m$ is a local minimizer of $f$, then $0 \in \partial f(x)$. Notice that in case $f$ is continuously differentiable around $x \in \mathbb{R}^m$ we have $\partial f(x)=\{\nabla f(x)\}$. Let us denote by
$$
\operatorname{crit}(f)=\left\{x \in \mathbb{R}^m: 0 \in \partial f(x)\right\}
$$
the set of (limiting)-critical points of $f$. 
We now define functions satisfying the Kurdyka-Lojasiewicz property, playing a crucial role in proving the convergence of algorithms in the nonconvex setting. For $\eta \in(0,+\infty]$, we denote by $\Theta_\eta$ the class of concave and continuous functions $\varphi:[0, \eta) \rightarrow[0,+\infty)$ such that $\varphi(0)=0, \varphi$ is continuously differentiable on $(0, \eta)$, continuous at 0 and $\varphi^{\prime}(s)>0$ for all $s \in(0, \eta)$.We denote the distance function to a set, defined for $A \subseteq \mathbb{R}^m$ as dist $(x, A)=\inf _{y \in A}\|x-y\|$ for all $x \in \mathbb{R}^m$.

\begin{definition}[Kurdyka-\L ojasiewicz property]
Let $f: \mathbb{R}^m \rightarrow(-\infty,+\infty]$ be a proper and lower semicontinuous function. We say that $f$ satisfies the Kurdyka-\L ojasiewicz (K\L ) property at $\bar{x} \in \operatorname{dom} \partial f=\left\{x \in \mathbb{R}^m: \partial f(x) \neq \emptyset\right\}$ if there exists $\eta \in(0,+\infty]$, a neighborhood $U$ of $\bar{x}$ and a function $\varphi \in \Theta_\eta$ such that for all $x$ in the intersection
$$
U \cap\left\{x \in \mathbb{R}^m: f(\bar{x})<f(x)<f(\bar{x})+\eta\right\}
$$
the following inequality holds
$$
\varphi^{\prime}(f(x)-f(x)) \operatorname{dist}(0, \partial f(x)) \geq 1
$$
If $f$ satisfies the K\L property at each point in dom $\partial f$, then $f$ is called a K\L~function.
\end{definition}
To the class of K\L functions belong semi-algebraic, real sub-analytic, semiconvex, uniformly convex and convex functions satisfying a growth condition. See \citep{boct2016inertial} for a full discussion. The following lemma turns out to be useful characterization of the property:
\begin{lemma}
Let $\Omega \subseteq \mathbb{R}^m$ be a compact set and let $f: \mathbb{R}^m \rightarrow(-\infty,+\infty)$ be a proper and lower semicontinuous function. Assume that $f$ is constant on $\Omega$ and $f$ satisfies the K\L~ property at each point of $\Omega$. Then there exist $\varepsilon, \eta>0$ and $\varphi \in \Theta_\eta$ such that for all $\bar{x} \in \Omega$ and for all $x$ in the intersection
$$
\left\{x \in \mathbb{R}^m: \operatorname{dist}(x, \Omega)<\varepsilon\right\} \cap\left\{x \in \mathbb{R}^m: f(\bar{x})<f(x)<f(\bar{x})+\eta\right\}
$$
the following inequality holds
$$
\varphi^{\prime}(f(x)-f(x)) \operatorname{dist}(0, \partial f(x)) \geq 1
$$   
\end{lemma}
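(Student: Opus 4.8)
The plan is to run the standard \emph{uniformization} argument for the Kurdyka--\L ojasiewicz property (as in \citep{boct2016inertial} and the references therein): cover $\Omega$ by finitely many of the neighborhoods furnished by the pointwise K\L\ property, then glue the associated desingularizing functions into a single one by summation. Write $c$ for the common value of $f$ on $\Omega$, which is well defined precisely because $f$ is constant on $\Omega$; this constancy is what will let the width of the admissible sublevel band be taken uniform in $\bar x$, and it is the first place the hypotheses are really used.

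First I would collect the local data. For each $\bar x\in\Omega$ the K\L\ property at $\bar x$ yields $\eta_{\bar x}\in(0,+\infty]$, a ball $B(\bar x,\varepsilon_{\bar x})$ contained in the corresponding neighborhood, and $\varphi_{\bar x}\in\Theta_{\eta_{\bar x}}$, such that
\[
\varphi_{\bar x}'\bigl(f(x)-c\bigr)\,\operatorname{dist}\bigl(0,\partial f(x)\bigr)\ \ge\ 1
\]
for every $x\in B(\bar x,\varepsilon_{\bar x})$ with $c<f(x)<c+\eta_{\bar x}$ (at points where $\partial f(x)=\emptyset$ this is vacuous, since $\operatorname{dist}(0,\emptyset)=+\infty$). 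By compactness of $\Omega$ extract a finite subcover $B(x_1,\varepsilon_1),\dots,B(x_p,\varepsilon_p)$ and set $V=\bigcup_{i=1}^{p}B(x_i,\varepsilon_i)$, an open set containing $\Omega$. The one genuinely non-formal step is to pass from this cover to a uniform radius $\varepsilon$: if $V=\mathbb{R}^m$ any $\varepsilon>0$ works, and otherwise $x\mapsto\operatorname{dist}(x,V^c)$ is continuous and strictly positive on the compact set $\Omega$, hence bounded below there by some $\varepsilon>0$, and a short argument then shows $\{x:\operatorname{dist}(x,\Omega)<\varepsilon\}\subseteq V$.

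Next I would glue. Put $\eta=\min_{1\le i\le p}\eta_i\in(0,+\infty]$ and $\varphi=\sum_{i=1}^{p}\varphi_i$, which makes sense since each $\varphi_i$ is defined on $[0,\eta_i)\supseteq[0,\eta)$. A finite sum of functions in the classes $\Theta_{\eta_i}$, restricted to $[0,\eta)$, is again concave, continuous on $[0,\eta)$, continuously differentiable on $(0,\eta)$, zero at the origin, and has strictly positive derivative $\varphi'=\sum_i\varphi_i'$ on $(0,\eta)$; thus $\varphi\in\Theta_\eta$. Finally, for $\bar x\in\Omega$ (so $f(\bar x)=c$) and $x$ with $\operatorname{dist}(x,\Omega)<\varepsilon$ and $c<f(x)<c+\eta$, the point $x$ lies in some $B(x_{i_0},\varepsilon_{i_0})$ and satisfies $f(x)<c+\eta\le c+\eta_{i_0}$, so the $i_0$-th local inequality applies; since every $\varphi_i'\ge 0$,
\[
\varphi'\bigl(f(x)-c\bigr)\operatorname{dist}\bigl(0,\partial f(x)\bigr)\ \ge\ \varphi_{i_0}'\bigl(f(x)-c\bigr)\operatorname{dist}\bigl(0,\partial f(x)\bigr)\ \ge\ 1,
\]
which is exactly the claimed inequality (with $f(\bar x)=c$ read in the argument of $\varphi'$). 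I do not expect a deep obstacle here: the argument is essentially bookkeeping, and the hypotheses are consumed only through the constancy of $f$ on $\Omega$ (to make the band $c<f(x)<c+\eta$ uniform) and the compactness of $\Omega$ (to reduce to finitely many K\L\ balls and to produce the uniform $\varepsilon$). The mildly delicate points are the containment $\{x:\operatorname{dist}(x,\Omega)<\varepsilon\}\subseteq V$ and the routine verification that each defining property of $\Theta_\eta$ is stable under finite sums.
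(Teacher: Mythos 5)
Your proof is correct: the finite subcover of $\Omega$ by K\L\ neighborhoods, the uniform radius $\varepsilon$ obtained from the positive minimum of $\operatorname{dist}(\cdot,V^{c})$ on the compact set $\Omega$, and the glued desingularizing function $\varphi=\sum_{i}\varphi_{i}$ with $\eta=\min_{i}\eta_{i}$ are exactly the standard uniformization argument (as in Bolte--Sabach--Teboulle and the works of Bo\c{t} et al.) that this lemma is quoted from -- the paper itself states it without proof, deferring to the cited literature. Your reading of the statement's typo $\varphi'(f(x)-f(x))$ as $\varphi'(f(x)-f(\bar{x}))$, with $f(\bar{x})$ equal to the common value $c$ of $f$ on $\Omega$, is the intended one.
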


\subsection{Proof of \Cref{prop:convergence}}\label{sec:proof_prop}
\begin{prop}
    Assume that for all $n$, $0<\underline{\alpha}\leq\alpha_n\leq\frac{\sigma_n}{L_{\nabla_{\mc{L}}}}$ and assumptions above hold. Then, the following are true:
    \begin{itemize}
        \item $\left(\mathcal{L} + \lambda \operatorname{rank}\right)(W_n)$ is non-increasing and convergent.
        \item $\sum_n\|W_{n+1}-W_{n}\| < +\infty$
        \item $W_n$ converges, with $\underset{n\to\infty}{\lim} W_n = W^*\in {\operatorname{crit}{\left(\mathcal{L} + \lambda \operatorname{rank}\right)}}$.
        \item For $F_n(W) = \frac12\|\mc{I}_n^{1/2}W\|^2_2$ with $\mc{I}_n\to \mc{I}$ positive definite, the smallest non-zero singular value satisfies $\sigma_{\text{min}}\left(\mc{I}^{1/2}W^*\right)\geq \sqrt{\underline{\alpha} \lambda}$. And thus $\sigma_{\text{min}}\left(W^{*}\right)  \geq \sqrt{\sigma_{\text{max}}\left(\mc{I}\right)\underline{\alpha} \lambda}$
    \end{itemize}
\end{prop}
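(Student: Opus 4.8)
The plan is to read \Cref{eq:step_prox} as a variable-metric (Bregman) proximal-gradient iteration for the nonconvex, nonsmooth composite objective $\Phi:=\mathcal{L}+\lambda\operatorname{rank}$, and then run the standard Kurdyka--\L{}ojasiewicz descent argument (in the spirit of \cite{boct2016inertial}) adapted to this setting: the machinery needs a sufficient-decrease inequality, a relative-error (limiting-subgradient) bound, and a limiting-continuity condition for $\{W_n\}$, after which the K\L{} property of $\Phi$ upgrades these to finite length and convergence to a critical point. That $\Phi$ is K\L{} on bounded sets is exactly what \Cref{lem:rank} (rank is proper, lsc, bounded, semi-algebraic) together with the sub-analyticity of $\mathcal{L}$ from the Note provide, since a finite sum of sub-analytic functions is sub-analytic on any bounded set. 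For sufficient decrease I would compare the value of the subproblem in \Cref{eq:step_prox} at $W_{n+1}$ against its value at $W_n$, use $\sigma_n$-strong convexity of $F_n$ (so $D_{F_n}(W_{n+1},W_n)\ge\tfrac{\sigma_n}{2}\|W_{n+1}-W_n\|^2$) and the descent lemma for $\mathcal{L}$ (constant $L_{\nabla_{\mc{L}}}$), obtaining $\Phi(W_{n+1})\le\Phi(W_n)-\big(\tfrac{\sigma_n}{2\alpha_n}-\tfrac{L_{\nabla_{\mc{L}}}}{2}\big)\|W_{n+1}-W_n\|^2$, where the stepsize bound $\alpha_n\le\sigma_n/L_{\nabla_{\mc{L}}}$ with $\alpha_n\ge\underline{\alpha}$ makes the coefficient a nonnegative constant. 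Hence $\Phi(W_n)$ is non-increasing; since $\mathcal{L}$ is coercive and $\operatorname{rank}\ge 0$, $\Phi$ is coercive, so the iterates stay in a bounded sublevel set, $\Phi(W_n)$ converges, and telescoping yields $\sum_n\|W_{n+1}-W_n\|^2<+\infty$, in particular $\|W_{n+1}-W_n\|\to0$.

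Next, the optimality condition for \Cref{eq:step_prox} is $0\in\tfrac{1}{\alpha_n}\big(\nabla F_n(W_{n+1})-\nabla F_n(W_n)\big)+\nabla\mathcal{L}(W_n)+\lambda\partial\operatorname{rank}(W_{n+1})$, so $v_{n+1}:=\big(\nabla\mathcal{L}(W_{n+1})-\nabla\mathcal{L}(W_n)\big)-\tfrac{1}{\alpha_n}\big(\nabla F_n(W_{n+1})-\nabla F_n(W_n)\big)\in\partial\Phi(W_{n+1})$, and Lipschitzness of $\nabla\mathcal{L}$ with the uniform bound $L_{\nabla F_n}\le\overline{L_{\nabla F}}$ gives $\|v_{n+1}\|\le\big(L_{\nabla_{\mc{L}}}+\overline{L_{\nabla F}}/\underline{\alpha}\big)\|W_{n+1}-W_n\|$, the required relative-error bound. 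For limiting continuity, pick a subsequence $W_{n_k}\to\bar W$ (it exists by boundedness); then $W_{n_k-1}\to\bar W$ as well, and plugging $W=\bar W$ into the subproblem comparison, together with $D_{F_n}(\bar W,W_n)\le\tfrac{\overline{L_{\nabla F}}}{2}\|\bar W-W_n\|^2\to0$ and lower semicontinuity of $\operatorname{rank}$, forces $\operatorname{rank}(W_{n_k})\to\operatorname{rank}(\bar W)$, hence $\Phi(W_{n_k})\to\Phi(\bar W)$. With these three ingredients and the K\L{} property of $\Phi$, the abstract convergence theorem yields $\sum_n\|W_{n+1}-W_n\|<+\infty$, so $\{W_n\}$ is Cauchy and converges to some $W^*$, and closedness of the graph of $\partial\Phi$ with $\|v_{n+1}\|\to0$ gives $W^*\in\operatorname{crit}\Phi$; this settles the first three bullets.

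For the last bullet I would specialize to $F_n(W)=\tfrac12\|\mc{I}_n^{1/2}W\|_2^2$, which is $\sigma_{\min}(\mc{I}_n)$-strongly convex with $\nabla F_n$ being $\sigma_{\max}(\mc{I}_n)$-Lipschitz, so the hypotheses hold for $n$ large once $\mc{I}_n\to\mc{I}\succ0$. Completing the square, $W_{n+1}$ solves $\operatorname{argmin}_W\tfrac{1}{2\alpha_n}\|\mc{I}_n^{1/2}(W-Z_n)\|_2^2+\lambda\operatorname{rank}(W)$ with $Z_n=W_n-\alpha_n\mc{I}_n^{-1}\nabla\mathcal{L}(W_n)$; substituting $V=\mc{I}_n^{1/2}W$ and using $\operatorname{rank}(\mc{I}_n^{1/2}W)=\operatorname{rank}(W)$ turns this into the Frobenius-norm proximal map of $\alpha_n\lambda\operatorname{rank}$ evaluated at $\mc{I}_n^{1/2}Z_n$, whose Eckart--Young closed form retains precisely the singular values exceeding the threshold $\sqrt{2\alpha_n\lambda}$ and zeroes the rest. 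Thus every $\mc{I}_n^{1/2}W_{n+1}$ has all nonzero singular values $\ge\sqrt{2\alpha_n\lambda}\ge\sqrt{2\underline{\alpha}\lambda}$; letting $n\to\infty$, using $W_n\to W^*$, $\mc{I}_n\to\mc{I}$ and continuity of singular values, the nonzero singular values of $\mc{I}^{1/2}W^*$ are bounded below by $\sqrt{\underline{\alpha}\lambda}$ (up to the constant convention in the statement), i.e. $\sigma_{\min}(\mc{I}^{1/2}W^*)\ge\sqrt{\underline{\alpha}\lambda}$; finally $\sigma_{\min}(W^*)=\sigma_{\min}\big(\mc{I}^{-1/2}(\mc{I}^{1/2}W^*)\big)$ combined with standard singular-value inequalities for this product yields the claimed bound on $\sigma_{\min}(W^*)$ in terms of $\sigma_{\max}(\mc{I})$, $\underline{\alpha}$ and $\lambda$.

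The main obstacle I anticipate is twofold, both parts concerning whether the generic K\L{} machinery legitimately applies. First, the Bregman kernel $F_n$ is \emph{time-varying} rather than a fixed Euclidean (or fixed Bregman) prox, so the entire argument must be carried out with $n$-independent constants: this is where $\alpha_n\ge\underline{\alpha}$, the induced bound $\sigma_n\ge\underline{\alpha}L_{\nabla_{\mc{L}}}$, and $L_{\nabla F_n}\le\overline{L_{\nabla F}}$ are essential, and one must check that sufficient decrease, the subgradient bound, and the limiting-continuity step all survive with uniform constants. Second, the penalty $\operatorname{rank}$ is genuinely discontinuous, so the limiting-continuity condition is not automatic — one has to rule out a downward jump of the rank along the convergent subsequence, which is exactly what the exact-minimizer comparison for the subproblem in \Cref{eq:step_prox} delivers, but which must be argued as sketched above rather than by a naive appeal to lower semicontinuity alone.
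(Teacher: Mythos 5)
Your proposal follows essentially the same route as the paper: a sufficient-decrease inequality from the $\sigma_n$-strong convexity of $F_n$ and the descent lemma under the stepsize condition, followed by the Kurdyka--\L{}ojasiewicz machinery of the Bregman proximal-gradient framework (you spell out the relative-error and limiting-continuity ingredients that the paper delegates wholesale to its cited reference), and for the last bullet the same reduction to the closed-form singular-value hard-thresholding of the prox of $\operatorname{rank}$ plus continuity of singular values and the product inequality $\sigma_{\min}(W^*)=\sigma_{\min}\bigl(\mc{I}^{-1/2}\mc{I}^{1/2}W^*\bigr)\geq\sigma_{\min}\bigl(\mc{I}^{-1/2}\bigr)\sigma_{\min}\bigl(\mc{I}^{1/2}W^*\bigr)$. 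The argument is correct in the same sense and to the same level of rigor as the paper's own proof (including the shared looseness about the exact thresholding constant and the form of the final $\sigma_{\max}(\mc{I})$ factor), so no genuinely different approach or gap to report.
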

\begin{proof}
Using \Cref{lem:rank}, rank is semi-algebraic and therefore sub-analytic. Thus $\left(\mathcal{L} + \lambda \operatorname{rank}\right)$ is sub-analytic and therefore K\L. As rank is not continuous, the overall function may not admit a K{\L} exponent.
For a constant $F_n$ this would be a direct application of theory in \cite[Theorem 12]{boct2016inertial} with $\beta = 0$. In general however, the main descent result is as follows:
$$
D_{F_n}\left(W_{n+1}, W_{n}\right)+\left\langle W_{n+1}-W_{n}, \alpha_n \nabla \mc{L}\left(W_{n}\right)\right\rangle+\alpha_n \lambda\operatorname{rank}\left(W_{n+1}\right) \leq \alpha_n \lambda\operatorname{rank}\left(W_{n}\right) .
$$
On the other hand, by descent lemma we have
$$
\left\langle\nabla \mc{L}\left(W_{n}\right), W_{n+1}-W_{n}\right\rangle \geq \mc{L}\left(W_{n+1}\right)-\mc{L}\left(W_{n}\right)-\frac{L_{\nabla \mc{L}}}{2}\left\|W_{n}-W_{n+1}\right\|^2.
$$
At the same time for any $\mu > 0$,
$$
\left\langle W_{n+1}-W_{n}, W_{n-1}-W_{n}\right\rangle \geq-\left(\frac{\mu}{2}\left\|W_{n}-W_{n+1}\right\|^2+\frac{1}{2 \mu}\left\|W_{n-1}-W_{n}\right\|^2\right)
$$
while by assumption
$$
\frac{\sigma_n}{2}\left\|W_{n+1}-W_{n}\right\|^2 \leq D_{F_n}\left(W_{n+1}, W_{n}\right).
$$
This implies that above becomes
$$
(\lambda\operatorname{rank}+\mc{L})\left(W_{n+1}\right)+\frac{\sigma_n-L_{\nabla \mc{L}} \alpha_n}{2 \alpha_n}\left\|W_{n+1}-W_{n}\right\|^2 \leq(\lambda\operatorname{rank}+\mc{L})\left(W_{n}\right).
$$
By the step size assumption, we have $\sigma_n \geq \underline{\alpha}/L_{\nabla\mc{L}}$. By the lipshitzness assumption, we must further have $\sigma_n \leq \overline{L_{\nabla F}}$. Thanks to these bounds, the rest of the results from \cite{boct2016inertial} apply directly.
    
For the last point, this is a direct consequence of \cite[Theorem 3]{hiriart2013eckart}. To be precise, we must have that 
$\sigma_{\text{min}}\left(\mc{I}_n^{1/2}W_{n+1}\right)\geq \sqrt{{\alpha_n} \lambda}$. By continuity of singular values, using e.g. Weyl's inequality, we have the desired result as then the minimum non-zero singular value is continuous outside of zero. Lastly, 
$\sigma_{\text{min}}\left(W^{*}\right)=\sigma_{\text{min}}\left(\mc{I}^{-1/2} \mc{I}^{1/2}W^{*}\right)\geq\sigma_{\text{min}}\left(\mc{I}^{-1/2}\right)\sigma_{\text{min}}\left(\mc{I}^{1/2}W^{*}\right)$
\end{proof}
\subsection{Proof of \Cref{lem:submersion}}\label{ap:proof_lemma}
\begin{lemma}\label{lem:submersion}
    Let $f:X\to Y$ a submersion between two manifolds, and let $M\in Y$ a subset of $Y$. Then, $M$ is an embedded submanifold of $Y$ if and only if $f^{-1}(M)$ is an embedded submanifold of $X$.
\end{lemma}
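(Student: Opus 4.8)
The plan is to prove the two directions separately. That $M$ being an embedded submanifold forces $f^{-1}(M)$ to be one is the standard transversal-preimage statement and is routine; the converse is the substantive direction, and I would obtain it by pulling $M$ back along a local section of $f$ and observing that such a section is automatically transverse to $f^{-1}(M)$.

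For the forward direction, recall that an embedded submanifold is locally the zero set of a submersion: around any $q \in M$ there is an open $V \subseteq Y$ and a submersion $\Phi\colon V \to \mathbb{R}^{k}$ with $M \cap V = \Phi^{-1}(0)$. Then $\Phi \circ f \colon f^{-1}(V) \to \mathbb{R}^k$ is a submersion (composite of submersions) and $f^{-1}(M) \cap f^{-1}(V) = (\Phi\circ f)^{-1}(0)$, so the regular level set theorem exhibits $f^{-1}(M)$ as an embedded submanifold near each of its points; being embedded is a local condition, so this direction is done. (Equivalently, a submersion is transverse to every submanifold.)

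For the converse, write $S := f^{-1}(M)$, assumed embedded, and fix $q \in M$. Since $f$ is a submersion its image is open in $Y$, so I may assume $q = f(p)$ for some $p$ (when $f$ is not surjective, either restrict $Y$ to the open submanifold $f(X)$, or note that in our application $M$ is by construction contained in the image of $f$). The local normal form for submersions yields an open neighbourhood $U$ of $q$ and a smooth section $\sigma\colon U \to X$ with $f\circ\sigma = \mathrm{id}_U$ and $\sigma(q)=p$; this $\sigma$ is an embedding onto an $m$-dimensional submanifold $\Sigma := \sigma(U) \subseteq X$, where $m = \dim Y$. From $f\circ\sigma = \mathrm{id}_U$ one reads off $M\cap U = \sigma^{-1}(S)$, since $y\in M \Leftrightarrow \sigma(y)\in f^{-1}(M)$. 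It then remains to see that $\Sigma$ meets $S$ transversally, for then $\Sigma\cap S$ is an embedded submanifold of $X$ and hence $M\cap U = \sigma^{-1}(\Sigma\cap S)$ is an embedded submanifold of $U$; ranging over $q$ shows $M$ is embedded in $Y$.

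The transversality is the crux, and the point is that it is automatic. At $p' \in \Sigma\cap S$, differentiating $f\circ\sigma=\mathrm{id}$ gives $\mathrm{im}(d\sigma)\cap\ker(df_{p'})=\{0\}$, hence $\mathrm{im}(d\sigma)\oplus\ker(df_{p'})=T_{p'}X$ by dimension count. But the fibre $f^{-1}(f(p'))$ lies inside $f^{-1}(M)=S$ and, as a regular level set of the submersion $f$, has tangent space exactly $\ker(df_{p'})$; therefore $\ker(df_{p'})\subseteq T_{p'}S$. Combining, $T_{p'}X = \mathrm{im}(d\sigma)+\ker(df_{p'}) \subseteq T_{p'}\Sigma + T_{p'}S$, i.e. $\Sigma\pitchfork S$. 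The main obstacle to anticipate is precisely this step: the preimage of an embedded submanifold under an arbitrary embedding need not be embedded, so one genuinely needs the structural fact that the fibres of $f$ lie inside $S$; the only other subtlety is the bookkeeping for non-surjective $f$, handled by passing to the open image.
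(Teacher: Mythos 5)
Your proof is correct, and the converse is handled by a genuinely different route than the paper's. The paper's forward direction is the same as yours (it simply cites Lee, Cor.~6.31, i.e.\ the transversality/level-set argument you spell out). For the converse, the paper works in the constant-rank normal form, where $f$ is the coordinate projection $(x_1,\dots,x_m)\mapsto(x_1,\dots,x_n)$, and converts a slice chart for $f^{-1}(M)$ into a slice chart for $M$; you instead pull $M$ back along a local section $\sigma$ of $f$ and show $\sigma(U)$ is automatically transverse to $S=f^{-1}(M)$ because $S$ is saturated (every fibre through a point of $S$ lies in $S$, so $\ker df\subseteq TS$). These are two packagings of the same local picture (your $\sigma$ is the slice $x\mapsto(x,0)$ in the normal-form chart), but your version has the advantage of isolating exactly the structural fact that makes the coordinate argument work: the paper's step ``a slice chart for $f^{-1}(M)\cap U$ gives a slice chart for $M\cap V$'' is only valid because $f^{-1}(M)$ is a union of fibres, which the paper leaves implicit and you prove as the transversality claim. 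You also correctly flag that the converse needs $M\subseteq f(X)$ (or $f$ surjective) — without this the statement as written can fail, since $f^{-1}(M)$ only sees $M\cap f(X)$ — a hypothesis the paper's proof tacitly assumes and which indeed holds in the paper's application, where $f$ is restricted so that the relevant set lies in its image.
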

\begin{proof}
    One way comes directly from  \citep[Corollary 6.31]{lee2003smooth}. For the other direction, we can use the constant rank theorem, which gives us charts $U\subseteq X \to V\subseteq Y$, such that $f:U\to V$ is $(x_1,\dots,x_m)\to(x_1,\dots,x_n)$, where $m=\dim X, n=\dim Y$ with $m\geq n$. Then $f^{-1}({M})\subseteq X$, so $f^{-1}({M})\cap U \subseteq V$ embedded. Then find slice chart for $f^{-1}(M)\cap U$, which gives us a slice chart for $M\cap V$.
\end{proof}
\section{Methods Overview}\label{ap:methods}

\begin{table}
    \centering
    \caption{Summary of Compression Methods}
    \resizebox{\textwidth}{!}{%
        \begin{tabular}{lllcc}
            \toprule
            Projection & Method & Origin & Iterative & Rank Selection Criterion \\
            \midrule
            \multirow{6}{*}{Euclidean} & \textcolor{rgb,255:red,178;green,34;blue,34}{OIALR} & \citep{OIALR} & Yes & Layer Maximal Singular Value \\
            & \textcolor{rgb,255:red,255;green,140;blue,0}{IEHT} &  (Ours \Cref{sec:IHT,ap:IHT})& Yes & Layer Energy \\
            & \textcolor{rgb,255:red,128;green,128;blue,0}{TRP} & \citep{TRP} & Yes & Layer Energy \\
            & \textcolor{rgb,255:red,255;green,99;blue,71}{globalIEHT} & (Ours \Cref{sec:IHT,ap:IHT})& Yes & Total Energy \\ 
            & \textcolor{rgb,255:red,139;green,69;blue,19}{SVD} & \citep{psichogios1994svd} & No & Fixed \\
            & \textcolor{rgb,255:red,210;green,105;blue,30}{TTM} & \citep{oseledets2011tensor} & No & Fixed \\
            \midrule
            \multirow{6}{*}{Information} & \textcolor{rgb,255:red,128;green,0;blue,128}{IFHT} & (Ours \Cref{sec:IHT,ap:IHT}) & Yes & Layer 'Fisher' Energy \\
            & \textcolor{rgb,255:red,17;green,61;blue,128}{FWTRP} & (Ours \Cref{sec:FWTRP}) & Yes & Layer 'Fisher' Energy \\
            & \textcolor{rgb,255:red,138;green,43;blue,226}{globalIFHT}  & (Ours \Cref{sec:IHT,ap:IHT}) & Yes & Global 'Fisher' Energy \\
            & \textcolor{rgb,255:red,65;green,105;blue,225}{FWSVD} & \citep{FWSVD} & No & Fixed \\
            & \textcolor{rgb,255:red,30;green,144;blue,255}{TFWSVD} & \citep{TFWSVD} & No & Fixed \\
            & \textcolor{rgb,255:red,186;green,85;blue,211}{FWTTM} & \citep{pletenev2023computational} & No & Fixed \\
            \bottomrule
        \end{tabular}
    }
    \label{tab:method_comparison}
    \vspace{-1em}
\end{table}
\subsection{OIALR}
\citep{OIALR} observed that the orthogonal basis of a network's weights stabilizes during the training process. Based on this, they propose to reduce the number of trainable parameters by iteratively orthogonalizing as follows:
\begin{enumerate}[nosep]
    \item The network is initially trained with a traditional full-rank scheme.
    \item After a number of iterations, the network's matrix weights are transitioned to their $U\Sigma V^T$ representation using singular value decomposition (SVD).
    \item The orthogonal bases $U$ and $V^T$ are frozen, and only the square matrix $\Sigma$ is trained using backpropagation.
    \item After a specified number of training steps, the bases $U$ and $V^T$ are updated by extracting the new bases from the trained $\Sigma$ matrix using SVD.
    \item A new inner rank is found by removing the singular values from $\Sigma$ whose absolute magnitude is less than $\beta$ times the largest singular value in the current $\Sigma$. $\beta$ is a hyperparameter that defaults to 0.1.
    \item This process is repeated until the end of training.
\end{enumerate}
We summarize the method in \Cref{alg:oialr}, and attempt to formalize the approach below.
\subsubsection{Orthogonality Informed Low Rank Training}\label{sec:app_oialr_theory}
\Cref{prop:convergence} establishes convergence and a maximal rank of the underlying solution. However, in practice using such a method implies computing SVD of all the weights at each iteration, making it prohibitively expensive. Below we instead consider the specific algorithm of \cite{OIALR}. For $U^0 = V^0 = \operatorname{Id}$ and $S^0 = W^0$, we can summarize it as:
\begin{align*}
    &S_*^{k+1} \in \crit_S \mc{L}(U_kSV_k^\top) \\
    &S^{k+1} \in \operatorname{Prox}_{\lambda_k}(\operatorname{rank})(S_*^{k+1}) \\
    &U_{k+1}, V_{k+1}^{\top} \in \operatorname{SVD}(U_kS^{k+1}V_k^\top) \\ 
    &W^{k+1} = U_kS^{k+1}V_k 
\end{align*}
This effectively means that we iteratively project via the prox operator, followed by finding a critical point on the low-rank submanifold. Note however, success of this method relies on the fact that during training the basis elements stabilize quite quickly. This allows us to choose a local chart on the low-rank submanifold on which we can perform the desired projection. 

In general it would be rather difficult to say anything about convergence of such an algorithm, as even in the convex case, given a small misalignment of the linear chart $U_k, V_k$ with the chart at the global minimum, the global minimum becomes unattainable. 

However, we can instead control directly the loss value jump under projection. So, say that instead of $S_*^{k+1} \in \crit_S \mc{L}(U_kSV_k^\top)$, we perform only a single step of gradient descent. In this case we have that 
$S_*^{k+1} = S^k - \frac{1}{L_{\nabla_{\mc{L}}}}\nabla_S\mc{L}(U_kS^kV_k^\top)$. However we know that $\nabla_S\mc{L}(U_kSV_k^{\top}) = U_k^{\top}\nabla_W\mc{L}(U_kSV_k^{\top})V_k$.
By descent lemma we have that:
\[
\mc{L}(U_kS^{k+1}_*V_k^\top) - \mc{L}(U_kS^kV_k^\top) \leq - \frac{L_{\nabla_{\mc{L}}}}{2}\|S^{k+1}_*-S^{k}\|^2 = -\frac{1}{2L_{\nabla_{\mc{L}}}}\|\nabla_S\mc{L}(U_kS^kV_k^\top)\|^2
\]
Thus, gradient descent guarantees decrease of the objective function. What about the projection step? Assuming now again that $S_*^{k+1} \in \crit_S \mc{L}(U_kSV_k^\top)$, we know that $\nabla_S\mc{L}(U_kS_{*}^{k+1}V_k^\top) = 0$. This however, does not guarantee criticality in the full space $\nabla_W\mc{L}(U_kS_{*}^{k+1}V_k^\top) \neq 0$. Despite this, the descent lemma still provides us with a bound on the total decrease: 
\begin{align*}
\mc{L}(U_kS^{k+1}V_k^\top) &- \mc{L}(U_kS^{k+1}_*V_k^\top) \\ &\leq \langle \nabla_W \mc{L}(U_kS^{k+1}_*V_k^\top),U_kS_{*}^{k+1}V_k^\top-U_kS^{k+1}V_k^\top\rangle \\&\qquad\qquad\qquad\qquad\qquad\qquad\qquad\qquad\;\qquad+ \frac{L_{\nabla_{\mc{L}}}}{2}\|U_kS^{k+1}_*V_k^\top-U_kS^{k+1}V_k^\top\|^2 \\    
&=\langle U_k^{\top}\nabla_W \mc{L}(U_kS^{k+1}_*V_k^\top)V_k,S_{*}^{k+1}-S^{k+1}\rangle + \frac{L_{\nabla_{\mc{L}}}}{2}\|U_kS^{k+1}_*V_k^\top-U_kS^{k+1}V_k^\top\|^2 \\
&=\langle \nabla_S \mc{L}(U_kS^{k+1}_*V_k^\top),S_{*}^{k+1}-S^{k+1}\rangle + \frac{L_{\nabla_{\mc{L}}}}{2}\|S^{k+1}_*-S^{k+1}\|^2 = \\
&= \frac{L_{\nabla_{\mc{L}}}}{2}\|S^{k+1}_*-S^{k+1}\|^2\\
&\leq \lambda_k L_{\nabla_{\mc{L}}}\left\{\operatorname{rank}(S^{k+1}_*) - \operatorname{rank}(S^{k+1})\right\} 
\\&< \sigma^2_{\text{min}}\left(S^{k+1}\right)L_{\nabla_{\mc{L}}}\left\{\operatorname{rank}(S^{k+1}_*) - \operatorname{rank}(S^{k+1})\right\}
\end{align*}
Thus, as long as $\|S^{k+1}_*-S^{k+1}\|^2$ is bounded, the objective increase is not large. This is directly controlled through $\lambda_k$, as by definition of prox, we have $\frac{1}{2\lambda_k}\|S^{k+1}-S_*^{k+1}\|^2 \leq \operatorname{rank}(S^{k+1}_*) - \operatorname{rank}(S^{k+1})$. We also know that $\lambda_k$ controls the largest singular value cutoff, and thus we have that $\lambda_k < \sigma^2_{\text{min}}\left(S^{k+1}\right)$. Although it is worth emphasizing that normally in the projection step we have explicit control on the size $\|S^{k+1}_*-S^{k+1}\|^2$. 
\subsection{IEHT and IFHT}\label{ap:IHT}
Based on the observations in \Cref{sec:exp_vit}, and section above we realize that the cutoff selection proposed by \citep{OIALR} is detrimental to downstream performance, especially given that compatibility the initial choice of chart is fundamental for establishing convergence. We instead propose to perform cutoffs based on the total energy (sum of squares of singular values), which turns out to be much milder.
\begin{figure*}[ht]
\centering
\begin{tabular}{cc}
\scalebox{0.45}{ %
\begin{minipage}[t]{\linewidth} %
\begin{algorithm}[H]
\caption{OIALR, {\textcolor{red}{IEHT}}}\label{alg:oialr}
\KwData{Model $M$}
\Parameters{Training steps $t_{\max}$, delay steps $d$, low-rank update frequency $\nu$, singular value cutoff fraction $\beta$}
\KwResult{Compressed trained model.} 
\For{$t = 0 \dots t_{\max}$}{
\uIf{$t<d$}{
    Train full-rank \;
  }
  \uElseIf{$t = d$}{
    Convert network to $USV^\top$ representation \;
    Freeze $U$ and $V$\;
  }
  \uElseIf{$t = 0\text{ mod }\nu$}{
    \For{$i = 0 \dots L$}{
${U_i}^{\prime}, {S_i}^{\prime}, {{V_i}^{\prime}}^\top \leftarrow \operatorname{svd}({S_i})$ \;
    $U_i\gets U_iU^{\prime}_i$ \;
    $V_i\gets V_iV^{\prime}_i$ \;
    $S_i\gets S^{\prime}_i$ \;
    Remove singular values $<\beta \cdot \max \left({S}_i\right)$ \;
    \textcolor{red}{For IEHT: Remove lower singular values contributing $<\beta \%$ of the total energy}\;
    Reshape ${U}_i, {V}_i, {S}_i$, and optimizer states\;
  }
    
  }
  \Else{
    Train network in ${U} S {V}^\top$ representation, with $U$ and $V$ frozen\;
  }
}
\texttt{\# Now compile into smaller model} \\
\For{$i = 0 \dots L$}{
${U_i}^{\prime}, {S_i}^{\prime}, {{V_i}^{\prime}}^\top \leftarrow \operatorname{svd}({S_i})$ \;
    $U_i\gets U_iU^{\prime}_i\sqrt{S^{\prime}_i}$ \;
    $V_i\gets V_iV^{\prime}_i\sqrt{S^{\prime}_i}$ \;
    Remove ${S}_i$ as parameters.
  }
\Return Trained factorized model $M$
\end{algorithm}
\end{minipage}
}
&

\scalebox{0.45}{ %
\begin{minipage}[t]{\linewidth} %
\begin{algorithm}[H]
\caption{IFHT}\label{alg:fwoialr}
\KwData{Model $M$}
\Parameters{Training steps $t_{\max}$, delay steps $d$, low-rank update frequency $\nu$, \textcolor{red}{energy cutoff fraction $\beta$}, \textcolor{red}{dataset for fisher estimation $\mathcal{D}$}}
\KwResult{Compressed trained model.} 
\For{$t = 0 \dots t_{\max}$}{
\uIf{$t<d$}{
    Train full-rank \;
  }
  \uElseIf{$t = d$}{
    Convert network to $USV^\top$ representation \;
    Freeze $U$ and $V$\;
  }
  \uElseIf{$t = 0\text{ mod }\nu$}{
    \textcolor{red}{${\widetilde{\mathcal{I}}}$ = $\texttt{fisher}(M, \mathcal{D})$\;}
    \For{$i = 0 \dots L$}{\textcolor{red}{
${U_i}^{\prime}, {S_i}^{\prime}, {{V_i}^{\prime}}^\top \leftarrow \operatorname{svd}({\widetilde{\mathcal{I}}}{S_i})$ \;
    $U_i\gets U_i{\widetilde{\mathcal{I}}}^{-1}U^{\prime}_i$ \;}
    $V_i\gets V_iV^{\prime}_i$ \;
    $S_i\gets S^{\prime}_i$ \;
    \textcolor{red}{Remove lower singular values contributing less than $<\beta \%$ of the total energy\;
    Re-factorize to keep $U_i,V_i$ semi-orthogonal\;}
    Reshape ${U}_i, {V}_i, {S}_i$ and optimizer states\;
  }
    
  }
  \Else{
    Train network in ${U} S {V}^\top$ representation, with $U$ and $V$ frozen\;
  }
}
\texttt{\# Now compile into smaller model} \\
\For{$l = 0 \dots L$}{
${U_i}^{\prime}, {S_i}^{\prime}, {{V_i}^{\prime}}^\top \leftarrow \operatorname{svd}({S_i})$ \;
    $U_i\gets U_iU^{\prime}_i\sqrt{S^{\prime}_i}$ \;
    $V_i\gets V_iV^{\prime}_i\sqrt{S^{\prime}_i}$ \;
    Remove ${S}_i$ as parameters.
  }
\Return Trained factorized model $M$
\end{algorithm}
\end{minipage}
}
\end{tabular}
\end{figure*}
\subsection{TRP and FWTRP}\label{sec:FWTRP}

\begin{figure*}[ht]
\centering
\begin{tabular}{cc}
\scalebox{0.45}{ %
\begin{minipage}[t]{\linewidth} %
\begin{algorithm}[H]
\caption{TRP (Trained Rank Pruning)}\label{alg:trp}
\KwData{Model $M$}
\Parameters{Training steps $t_{\max}$, low-rank update frequency \texttt{trp\_frequency}, singular value cutoff fraction $\beta$}
\KwResult{Compressed trained model.} 
\For{$t = 0 \dots t_{\max}$}{
\uIf{$t\;\operatorname{mod} $ \texttt{trp\_frequency} $ \neq 0$}{
    Train full-rank \;
  }
  \uElse{
    \For{$i = 0 \dots L$}{
    ${U_i}, {S_i}, {{V_i}}^\top \leftarrow \operatorname{svd}({W_i})$ \;
    Remove lower singular values contributing less than $<\beta \%$ of the total energy at $k$\;
    $W_i\gets U_i[:,:k]S[:k,:k]V_i[:,:k]^\top$ \;
    $G_i \gets U_i[:,:k]V_i[:,:k]^\top$
  }
  }
  \If{$t\;\operatorname{mod}$ \texttt{nuclear\_norm\_frequency} $ = 0$}{
    Make gradient step with \texttt{nuclear\_norm\_weight}$*G_i$
  }
}
\texttt{\# Now compile into smaller model} \\
\For{$i = 0 \dots L$}{
${U_i}, {S_i}, {{V_i}}^\top \leftarrow \operatorname{svd}({W_i})$ \;
    $U_i\gets U_i\sqrt{S_i}$ \;
    $V_i\gets V_i\sqrt{S_i}$ \;
    Remove ${S}_i$ as parameters.
  }
\Return Trained factorized model $M$
\end{algorithm}
\end{minipage}
}
&

\scalebox{0.45}{ %
\begin{minipage}[t]{\linewidth} %
\begin{algorithm}[H]
\caption{FWTRP (Trained Rank Pruning)}\label{alg:fwtrp}
\KwData{Model $M$}
\Parameters{Training steps $t_{\max}$, low-rank update frequency \texttt{trp\_frequency}, regularization frequency \texttt{nuclear\_norm\_frequency}, singular value cutoff fraction $\beta$}
\KwResult{Compressed trained model.} 
\For{$t = 0 \dots t_{\max}$}{
\uIf{$t\;\operatorname{mod} $ \texttt{trp\_frequency} $ \neq 0$}{
    Train full-rank \;
  }
  
  \uElse{
     \textcolor{red}{${\widetilde{\mathcal{I}}}$ = $\texttt{fisher}(M, \mathcal{D})$\;}
    \For{$i = 0 \dots L$}{
   \textcolor{red}{${U_i}, {S_i}, {{V_i}}^\top \leftarrow \operatorname{svd}({\widetilde{\mathcal{I}}}{S_i})$ }\;
    Remove lower singular values contributing less than $<\beta \%$ of the total energy at $k$\;
    \textcolor{red}{$W_i\gets {\widetilde{\mathcal{I}}}^{-1}U_i[:,:k]S[:k,:k]V_i[:,:k]^\top$ }\;
    $G_i \gets U_i[:,:k]V_i[:,:k]^\top$
  }
  }
  \If{$t\;\operatorname{mod}$ \texttt{nuclear\_norm\_frequency} $ = 0$}{
    Make gradient step with \texttt{nuclear\_norm\_weight}$*G_i$
  }
}
\texttt{\# Now compile into smaller model} \\
\For{$i = 0 \dots L$}{
${U_i}, {S_i}, {{V_i}}^\top \leftarrow \operatorname{svd}({W_i})$ \;
    $U_i\gets U_i\sqrt{S_i}$ \;
    $V_i\gets V_i\sqrt{S_i}$ \;
    Remove ${S}_i$ as parameters.
  }
\Return Trained factorized model $M$
\end{algorithm}
\end{minipage}
}
\end{tabular}
\end{figure*}

\bigskip

\noindent \textbf{Description of TRP:}
In trained rank pruning, every \texttt{trp\_frequency} steps, the networks singular values are hard thresholded to $0$, with nuclear norm gradients saved, and subsequently each \texttt{nuclear\_norm\_frequency}, gradient steps are performed.

\noindent \textbf{Description of FWTRP:}
The overall description is the same as that of TRP, but with SVD now being performed with respect to approximated fisher information.

\section{Further Experimental Details}\label{ap:details}
For CIFAR10, the models were trained for 250 epochs, while for ImageNet for 50. For BERT, models were trained for 100 epochs, even though convergence was observed very quickly. Pre-trained models were acquired from the \texttt{timm} and \texttt{huggingface} libraries and utilized as the baseline, with various compression techniques applied. 

In order to create the visualizations on \Cref{fig:BERT,fig:cifar10,fig:imagenet}, all the algorithms of \Cref{ap:methods} were run on various parrameter settings. For each run, test accuracy was evaluated at the end of each epoch and plotted on the figure. Then, the resulting pareto front was plotted to showcase best possible performance amongst all the parameters. The parameter values for all methods are shown in \Cref{tab:bert_params,tab:imagenet_params,tab:cifar10_params}. Each experimental setting was run on a single 24GB GPU for up to 1 week of execution.
\begin{table}[h!]
    \centering
    \caption{Parameter variations for ViT on CIFAR10}
    \label{tab:cifar10_params}
    \resizebox{\textwidth}{!}{%
    \begin{tabular}{@{}lll@{}}
        \toprule
        Method & Parameter & Explored Values \\
        \midrule
        SVD, FWSVD, TFWSVD & \texttt{rank} & 8, 16, 24, 32, 64, 128, 256 \\
        \midrule
        TRP, FWTRP & \texttt{trp\_threshold} & 0.9, 0.95, 0.99 \\
        & \texttt{trp\_frequency} & 50, 100, 500, 1000 \\
        & \texttt{nuclear\_norm\_weight} & 0.0003, 0.00005, 0.005 \\
        & \texttt{nuclear\_norm\_frequency} & \texttt{trp\_frequency} // 2 \\
        \midrule
        OIALR & \texttt{oialr\_threshold} & 0.25, 0.2, 0.15 \\
        & \texttt{oialr\_frequency} & 10, 15, 25 \\
        & \texttt{oialr\_depth\_schedule} & \texttt{'constant'} \\
        & \texttt{oialr\_type} & \texttt{'epoch'} \\
        & \texttt{oialr\_min\_rank\_percent} & 0.02 \\
        \midrule
        IEHT, IFHT & \texttt{oialr\_threshold} & 0.9, 0.925, 0.95, 0.97 \\
        & \texttt{oialr\_frequency} & 10, 15, 25 \\
        & \texttt{oialr\_depth\_schedule} & \texttt{'constant'},\texttt{'increasing'} \\
        & \texttt{oialr\_type} & \texttt{'epoch'} \\
        & \texttt{oialr\_min\_rank\_percent} & 0.02 \\
        \midrule
        Global IHT & \texttt{oialr\_threshold} & 0.8, 0.9, 0.95, 0.99 \\
        & \texttt{oialr\_frequency} & 10, 25, 50 \\
        & \texttt{oialr\_type} & \texttt{'epoch'} \\
        \bottomrule
        Base Training & \texttt{epochs} & 250 \\
        & \texttt{learning\_rate} & 0.0001 \\
        & \texttt{batch\_size} & 64 \\
        & \texttt{optimizer} & AdamW \\
    \end{tabular}%
    }
\end{table}

\begin{table}[h!]
    \centering
    \caption{Parameter variations for ViT on ImageNet}
    \label{tab:imagenet_params}
    \resizebox{\textwidth}{!}{%
    \begin{tabular}{@{}lll@{}}
        \toprule
        Method & Parameter & Explored Values \\
        \midrule
        SVD, FWSVD, TFWSVD & \texttt{rank} & 24, 32, 64, 128, 256, 384, 512 \\
        \midrule
        TRP, FWTRP & \texttt{trp\_threshold} & 0.9, 0.95, 0.97, 0.99 \\
        & \texttt{trp\_frequency} & 5000, 10000, 50000 \\
        & \texttt{nuclear\_norm\_weight} & 0.003, 0.05 \\
        & \texttt{nuclear\_norm\_frequency} & \texttt{trp\_frequency} // 2 \\
        \midrule
        IEHT, IFHT & \texttt{oialr\_threshold} & 0.9, 0.925, 0.95, 0.97 \\
        & \texttt{oialr\_frequency} & 2, 4, 8, 10 \\
        & \texttt{oialr\_depth\_schedule} & \texttt{'constant'},\texttt{'increasing'} \\
        & \texttt{oialr\_type} & \texttt{'epoch'} \\
        & \texttt{weight\_decay} & 5e-4 \\
        & \texttt{oialr\_min\_rank\_percent} & 0.02 \\
        \bottomrule
        Base Training & \texttt{epochs} & 50 \\
        & \texttt{learning\_rate} & 0.0001 \\
        & \texttt{batch\_size} & 128 \\
        & \texttt{optimizer} & AdamW \\
    \end{tabular}%
    }
\end{table}

\begin{table}[h!]
    \centering
    \caption{Parameter variations for BERT on GLUE}
    \label{tab:bert_params}
    \resizebox{\textwidth}{!}{%
    \begin{tabular}{@{}lll@{}}
        \toprule
        Method & Parameter & Explored Values \\
        \midrule
        SVD, FWSVD, TFWSVD & \texttt{rank} & 16, 24, 32, 64, 96, 128, 196, 256, 384 \\
        \midrule
        IEHT, IFHT & \texttt{oialr\_threshold} & 0.95, 0.97, 0.98, 0.99 \\
        & \texttt{oialr\_frequency} & 10, 25, 50, 100, 500, 1000 \\
        & \texttt{oialr\_depth\_schedule} & \texttt{'increasing'} \\
        & \texttt{oialr\_type} & \texttt{'step'} \\
        & \texttt{oialr\_min\_rank\_percent} & 0.02 \\
        \midrule
        TRP, FWTRP & \texttt{trp\_threshold} & 0.95, 0.97, 0.98, 0.99 \\
        & \texttt{trp\_frequency} & 10, 50, 100, 500 \\
        & \texttt{nuclear\_norm\_weight} & 0.0003, 0.005 \\
        & \texttt{nuclear\_norm\_frequency} & \texttt{trp\_frequency} // 2 \\
        \bottomrule
        Base Training & \texttt{epochs} & 100 \\
        & \texttt{learning\_rate} & 0.00002 \\
        & \texttt{batch\_size} & 32 \\
        & \texttt{optimizer} & AdamW \\
    \end{tabular}%
    }
\end{table}

\section{Global vs Local}\label{ap:global_local}
\begin{figure}[ht]
    \centering
    \includegraphics[width=\textwidth]{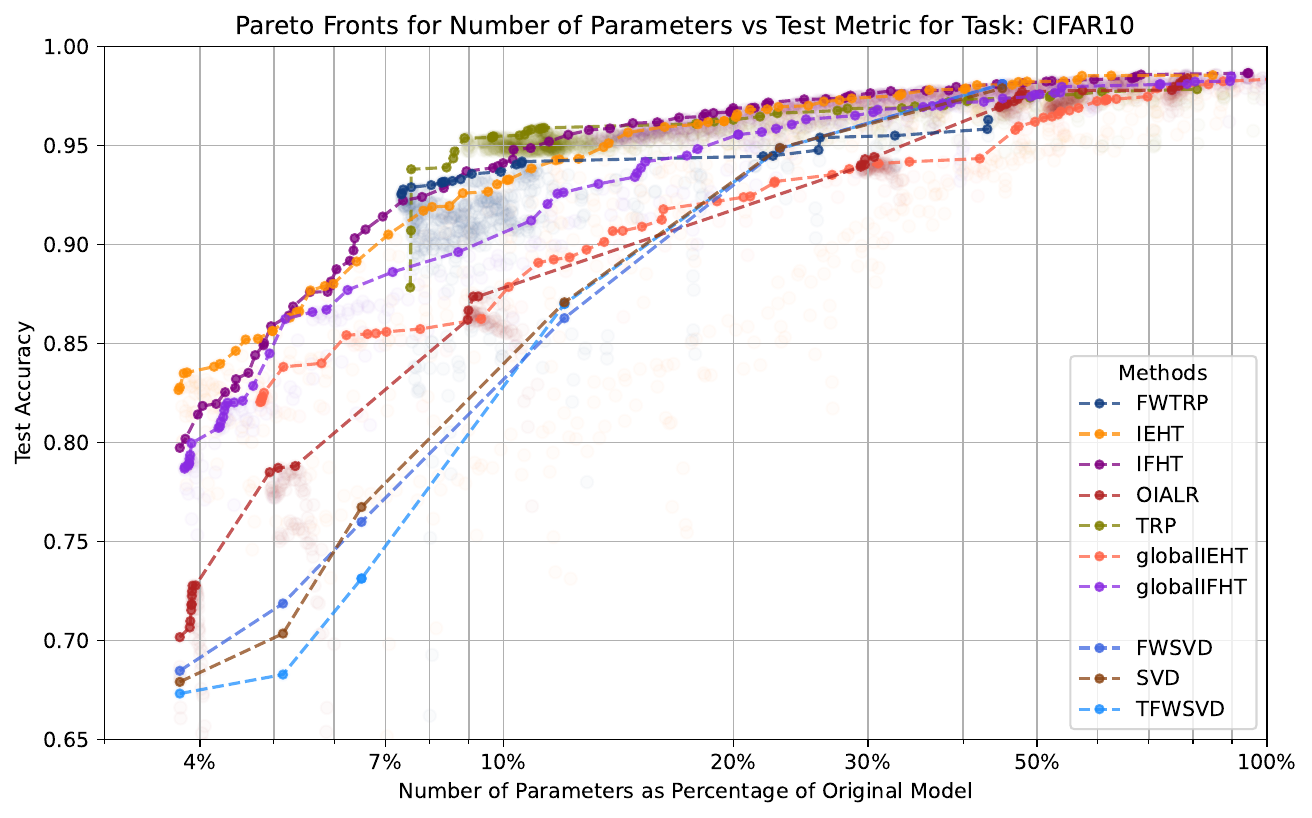} %
        \caption{CIFAR-10 classification, illustrating also the globalIEHT and globalIFHT methods.}
        \label{fig:globallocal}
\end{figure}
Given the results of \citep{RankDyna}, we wanted to see whether a move towards global rank selection in IFHT would result in improved performance in the setting of CIFAR10 with vision transformers. We illustrate the joint plot in \Cref{fig:globallocal}. For the sake of consistency, we illustrate a global modification of IEHT as well. As expected, the globalIEHT method does not perform well, as the singular values differ significantly amongst the layers, resulting in a global selection that is likely to create bottlenecks very early on. However, unlike \citep{RankDyna}, globalIFHT also underperforms, though not significantly so, when compared to the purely local IFHT. We hypothesize that the reason for this is analogous to the observation that BERT finetuning is more analogous to zero-shot performance comparison, as finetuning tends to not generalize, as discussed in \Cref{sec:bert_exp}.

\end{document}